\documentclass[11pt]{article}

\usepackage[preprint]{acl}
\usepackage{bbm}
\usepackage{capt-of}
\usepackage{times}
\usepackage{latexsym}
\usepackage{amsmath,amssymb,amsthm}
\newtheorem{proposition}{Proposition}
\usepackage[T1]{fontenc}

\usepackage[utf8]{inputenc}
\usepackage{amsthm}
\usepackage{xcolor}
\newtheorem{theorem}{Theorem}
\newtheorem{corollary}{Corollary}
\usepackage{microtype}
\usepackage{wrapfig}

\usepackage{inconsolata}

\usepackage{graphicx}
\usepackage{amsmath}
\usepackage{pifont}
\usepackage{pifont}
\usepackage{booktabs}

\usepackage{algorithm}
\usepackage{algpseudocode}

\usepackage{xcolor}
\usepackage{soul} 
\usepackage{comment}

\usepackage{xcolor}
\usepackage{soul} 
\usepackage{enumitem} 

\newcommand{\tofill}[1]{{\color{red}\textbf{[TO FILL: #1]}}}

\title{\textsc{ForkGRPO}: Efficient Exploration and Fine-Grained Credit Assignment for Diffusion Language Models}

\author{
  Zhaoyan Jia\textsuperscript{1,*}\quad
  Chengwei Zhou\textsuperscript{2,*}\quad
  Gourav Datta\textsuperscript{2}\quad
  Massoud Pedram\textsuperscript{1} \\
  \textsuperscript{1}University of Southern California, Los Angeles, USA \\
  \textsuperscript{2}Case Western Reserve University, Cleveland, USA \\
  {\small\textsuperscript{*}Equal contribution.}
}

\begin{document}
\maketitle
\providecommand{\tofill}[1]{\textbf{[#1]}}
\begin{abstract}
Diffusion large language models (dLLMs) have advanced rapidly, yet strengthening their reasoning through reinforcement learning (RL) remains an open challenge. Any-order generation can suffer from entropy degradation, which narrows the reasoning paths available for exploration. Recent work mitigates this issue by applying standard GRPO with left-to-right generation. However, independently generating every rollout from the input prompt can leave alternative continuations from intermediate states underexplored, while trajectory-level advantages assign uniform credit to all tokens without distinguishing the contributions of individual reasoning segments. We propose \textsc{ForkGRPO}, an RL framework that couples efficient exploration with fine-grained segment-level credit assignment. It samples multiple continuations from high-entropy intermediate states along a reference trajectory, prioritizing textual boundaries and reusing shared prefixes. The continuations provide complementary learning signals: within-state group-relative advantages guide which continuations to reinforce, while Monte Carlo state-value differences indicate which reasoning segments improve expected outcomes. Among the evaluated architecture-preserving RL methods, \textsc{ForkGRPO} achieves the best performance on three of four benchmarks, while improving over the directly comparable JustGRPO by up to 1.2 percentage points and using up to 27.8\% fewer generated tokens. \textsc{ForkGRPO-fast} further achieves a 2.49$\times$ training wall-clock speedup with negligible accuracy loss.

\end{abstract}

\section{Introduction}
\label{sec:introduction}

Diffusion large language models (dLLMs) have advanced rapidly, emerging as a promising alternative to autoregressive (AR) language models with competitive language modeling and downstream capabilities \citep{nie2025llada}. Their iterative denoising process supports parallel token generation and flexible decoding orders, but how to turn these capabilities into stronger reasoning through RL remains an open challenge. Recent work has shown that any-order decoding suffers from \emph{entropy degradation}: uncertain decisions are deferred until other generated tokens have constrained their possible outcomes, narrowing the reasoning paths available for exploration \citep{ni2026justgrpo}. JustGRPO mitigates this issue by enforcing left-to-right generation during RL, requiring the model to sample at uncertain logical forks before subsequent context constrains the available reasoning paths \citep{ni2026justgrpo}. This provides a promising starting point for dLLM reasoning, but leaves an important question: how can training explore the solution space more efficiently and thoroughly, and how can finer-grained credit assignment lead to better reasoning trajectories?

However, standard GRPO still has two limitations. First, \emph{exploration relies on independent rollouts from the input prompt}. Although left-to-right generation preserves alternative reasoning paths, it does not ensure that these possibilities are sufficiently explored. Under a finite sampling budget, rollouts may repeatedly follow common paths, leaving alternative continuations from intermediate states underexplored.
Revisiting these states and sampling multiple continuations could allocate the budget more effectively and improve coverage of the available solution space.

Second, \emph{credit is assigned at the trajectory level}. GRPO computes a relative advantage from the final reward and applies the same advantage to every token in a response \citep{shao2024deepseekmath}. This signal compares complete responses but does not distinguish progress made by individual reasoning segments. A segment that increases the likelihood of eventual success may receive a negative advantage because a later mistake leads to an incorrect answer. Conversely, all segments in a response with a positive advantage are reinforced together, regardless of their individual contributions. Estimating how the expected final reward changes across a segment could provide more fine-grained credit assignment and help the model learn better reasoning trajectories. Together, these limitations motivate sampling alternative continuations from shared intermediate states and deriving segment-level credit from their terminal rewards.


We propose \textsc{ForkGRPO}, an RL framework for dLLMs that combines exploration from intermediate states with segment-level credit assignment.
ForkGRPO samples a group of responses and uses one as the reference trajectory, along which it selects high-entropy fork states while prioritizing textual boundaries.
It samples multiple continuations from each fork state, exploring alternative reasoning paths without generating every response from the beginning.
Their average terminal rewards estimate state values, whose changes provide segment-level advantages along the reference trajectory.
Combined with group-relative advantages among continuations from each fork state, these provide complementary learning signals that guide both progress along a reasoning trajectory and the choice of continuations from its intermediate states. 
We further introduce \textsc{ForkGRPO-Fast} to reduce sampling and
optimization costs. Adaptive rollout adjusts continuation counts based on
pilot reward diversity, while sparse branch replay and state subsampling
reduce gradient computation over collected branches and generation states.

Our methods outperform all compared RL methods for dLLM on mathematical reasoning benchmarks. \textsc{ForkGRPO} outperforms JustGRPO on all four benchmarks by an average of 0.75 percentage points, while using at most 81.3\% of its token-generation budget on GSM8K.
\textsc{ForkGRPO-fast} further achieves a $2.49\times$ training speedup over full \textsc{ForkGRPO} while retaining 90.3\% accuracy on GSM8K. These results highlight the benefits of jointly improving exploration and credit assignment for dLLM reasoning.


\section{Related Work}
\label{sec:related_work}
\vspace{-2mm}

\paragraph{Diffusion language models.}
Diffusion language models formulate text generation as an iterative denoising process that recovers clean sequences from corrupted inputs.
Discrete diffusion methods extend this framework to categorical token spaces \citep{austin2021structured, lou2024discrete}, while masked diffusion models learn to reconstruct masked tokens at varying noise levels \citep{sahoo2024simple}.
Recent models such as LLaDA and Dream scale this approach to large language models, using bidirectional context to support parallel token generation and flexible decoding orders \citep{nie2025llada, ye2025dream7bdiffusionlarge}.
Block Diffusion combines autoregressive dependencies across blocks with diffusion within each block, supporting variable-length generation and efficient inference \citep{arriola2025block}.
These developments motivate studying how RL can further strengthen the reasoning capabilities of dLLMs.

\paragraph{Reinforcement learning for diffusion language models.}
Masked dLLMs generate text through iterative denoising, making the evaluation of generation probabilities a central challenge for RL \citep{nie2025llada, ye2025dream7bdiffusionlarge}. Diffu-GRPO uses a mean-field approximation and single-forward-pass probability estimates \citep{zhao2025d1}, while wd1 reduces reliance on likelihood ratios through a weighted log-likelihood objective \citep{tang2026wd1}. GDPO and ESPO use ELBO-based sequence-level surrogates \citep{rojas2026improving, ou2026principled}, and SPG combines lower and upper likelihood bounds to handle positive and negative advantages \citep{wang2026spg}. DiSPO instead defines a state-conditioned filling objective, sampling alternative completions from cached logits under a factorized policy \citep{oba2026dispo}. The accuracy of likelihood approximations remains an important concern: single-pass estimates can omit dependencies introduced during generation, and token-level ELBO contributions do not generally represent conditional log-probabilities \citep{ou2026principled, pan2026dtreerpo}.
JustGRPO adopts left-to-right generation to align rollout sampling with policy optimization, allowing exact evaluation of next-token probabilities under the sequential rollout policy while retaining bidirectional attention \citep{ni2026justgrpo}.
However, it still samples complete responses independently and assigns the same advantage to all tokens within each response.
\textsc{ForkGRPO} builds on this policy with entropy-guided forking to explore alternative reasoning paths more efficiently and thoroughly, and segment-level credit assignment to provide finer feedback on reasoning progress.

\paragraph{Fine-grained credit assignment and branching.}
For autoregressive LLMs, VinePPO and SPO-chain obtain intermediate credit through Monte Carlo value estimation using auxiliary continuations \citep{kazemnejad2025vineppo, guo2025spo}.
\textsc{ForkGRPO} additionally trains on these continuations through within-state group-relative updates, using the sampling budget for both value estimation and policy learning.
SPO-tree and d-TreeRPO reuse samples through recursive rollout trees \citep{guo2025spo, pan2026dtreerpo}, while TreePO, TreeRL and TreeAdv use uncertainty-guided branching \citep{li2025treepo,hou2025treerl, cao2026treeadv}. \textsc{ForkGRPO} combines entropy-guided branching with Monte Carlo value estimation using nonrecursive groups of full sequential continuations along a reference trajectory.
This design concentrates sampling on uncertain reasoning decisions while reusing existing reasoning.
The resulting segment credit measures changes in expected terminal reward, rather than redistributing trajectory-level advantages as in TreeAdv or relying on one-step greedy value proxies as in EGSPO-SA \citep{kunde2026egspo}. Segment-level credit and within-state comparisons provide complementary feedback on which segments improve expected outcomes and which continuations to reinforce.

\begin{figure*}[t]
    \centering
    \includegraphics[trim={85 195 52 160}, clip, width=1\linewidth]{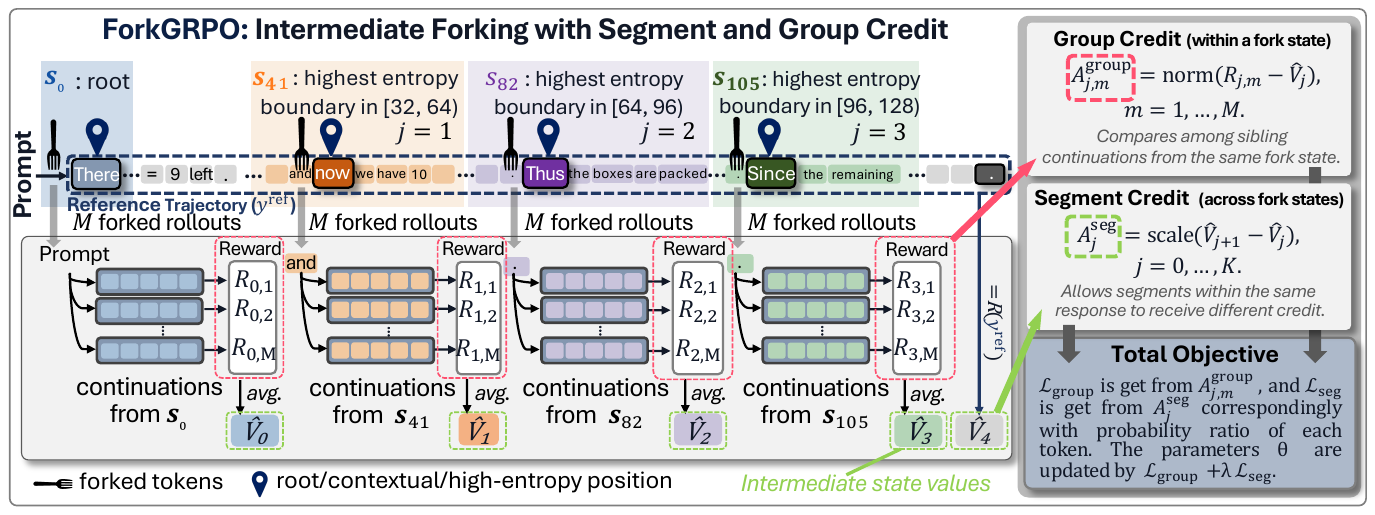}
    \vspace{-8mm}
    \caption{
    \textbf{Overview of \textsc{ForkGRPO}.}
    \textsc{ForkGRPO} samples multiple continuations from high-entropy intermediate states along a reference trajectory and uses their terminal rewards to estimate intermediate state values. These samples provide two complementary credit signals for joint policy optimization: within-state relative credit among sibling continuations and segment-level credit derived from the change in value between successive fork states. 
    }
    \vspace{-4mm}
    \label{fig:overview}
\end{figure*}

\section{Method}
\label{sec:method}
\vspace{-2mm}

In this section, we present \textsc{ForkGRPO}, illustrated in Figure~\ref{fig:overview}. We first describe
how fixed windows, textual boundaries, and predictive entropy guide branching along a reference trajectory.
We then derive within-state group-relative credit and across-state segment credit from the sampled
continuations and combine them in the training objective. Finally, we introduce \textsc{ForkGRPO-Fast} to reduce rollout generation and gradient replay costs.

\subsection{Boundary-Aware Forking of a Reference Trajectory}
\label{sec:method_forking}

\paragraph{Intermediate states and reference trajectory.} We collect trajectories using left-to-right sequential unmasking while retaining the model's bidirectional attention over the partially masked sequence, following the rollout protocol of JustGRPO \citep{ni2026justgrpo} for RL training of masked dLLMs. This ordering requires the model to resolve uncertain reasoning decisions before proceeding to subsequent content, while allowing alternative continuations to be explored from intermediate states. For a completed response $y$ of length $L$ for a given prompt $q$, we define the state immediately before revealing position $i$, where $0 \leq i < L$, as
\vspace{-2mm}
\begin{equation}
    s_i(q,y)
    =
    \bigl(q,y_{<i},[\mathrm{MASK}]^{L-i}\bigr).
    \label{eq:sequential_state}
    \vspace{-2mm}
\end{equation}
Thus, every position along a response defines an intermediate state with a committed prefix and an unresolved masked suffix.

For each prompt, we sample $M$ complete responses from the fixed rollout policy $\pi_{\mathrm{old}}$ and take the first as the reference trajectory $y^{\mathrm{ref}}$. \textsc{ForkGRPO} selects $K$ intermediate states along this trajectory from which to sample alternative continuations.

\paragraph{Windowing.}
To determine where to sample intermediate continuations, we first partition the early portion of the response into \mbox{$K$} non-overlapping windows,
\begin{equation}
\mathcal{W}_j = \{jW, \dots, (j+1)W-1\}, \quad j = 1, \dots, K.
\end{equation}
Windowing prevents a small number of locally uncertain positions from consuming the entire branching budget and provides rollouts at multiple stages of the trajectory.  We focus these windows on earlier portions of the response, where alternative continuations retain a longer horizon over which they can diverge or recover. Consistent with this intuition, later forks empirically exhibit substantially higher rates of zero group advantage (Section~\ref{sec:branch_depth_analysis}).

\paragraph{Boundary-aware forking.}
Within each window, not every token position provides a meaningful intermediate reasoning state. Forking inside a phrase or local expression may primarily perturb lexical realization rather than a completed reasoning decision. We therefore prioritize textual transition points and define
$\mathcal P_j\subseteq\mathcal W_j$ as positions immediately following a newline or sentence-ending punctuation. If no such position exists, we fall back to $\mathcal P_j=\mathcal W_j$. The boundary delimiters are specified in Appendix~\ref{app:fork_config}.

We further rank these fork position candidates by predictive entropy. This choice is motivated by the observation of JustGRPO~\citep{ni2026justgrpo} that high-uncertainty tokens often occur at branching points in reasoning, where different token choices can lead to different downstream solution paths. Consistent with this view, we also observe that branching from high-entropy states can recover correct continuations even when the reference trajectory ultimately fails. For revealing reference token $i$, we compute
$
    H_i^{\mathrm{ref}}=-\sum_{v\in\mathcal V}
    p_i(v)\log p_i(v),
$
where $\mathcal V$ is the token vocabulary,
$v\in\mathcal V$ denotes a candidate token,
and $p_i(v)=\pi_{\mathrm{old}}(v\mid s_i^{\mathrm{ref}})$
is its conditional probability.
The selected fork is
\vspace{-2mm}
\begin{equation}
    a_j
    =
    \arg\max_{i\in\mathcal P_j}
    H_i^{\mathrm{ref}}.
    \label{eq:fork_anchor}
    \vspace{-2mm}
\end{equation}

For sequential masked generation, this uncertainty signal is available directly from the token distribution at each partially masked state and requires no auxiliary model. At each selected state \mbox{$s_{a_j}^{\text{ref}}$}, we keep \mbox{$y_{<a_j}^{\text{ref}}$} fixed and sample \mbox{$M$} independent continuations starting from position \mbox{$a_j$}, so that the selected token itself is resampled. Together with the root state \mbox{$a_0 = 0$}, this produces \mbox{$K+1$} continuation groups. We denote completion \mbox{$m$} from group \mbox{$j$} by \mbox{$y^{(j,m)}$} and its terminal reward by \mbox{$R_{j,m}$}. 

\subsection{Within-State Group-Relative Credit}
\label{sec:method_group}

Each fork state produces a group of continuations that share the same
reference prefix but explore different future trajectories. These sibling
continuations provide a natural within-state comparison: their terminal
rewards reveal which alternatives are preferable when starting from the same
intermediate state.
We compute the group mean and sample variance as
\vspace{-2mm}
\begin{equation}
\begin{aligned}
    \bar R_j
    &=
    \frac{1}{M}\sum_{m=1}^{M}R_{j,m},\\
    \widehat{\sigma}_j^2
    &=
    \frac{1}{M-1}\sum_{m=1}^{M}
    (R_{j,m}-\bar R_j)^2.
\end{aligned}
\vspace{-2mm}
\end{equation}
The corresponding group-relative advantage is
\vspace{-2mm}
\begin{equation}
    A_{j,m}^{\mathrm{group}}
    =
    \frac{R_{j,m}-\bar R_j}
    {\widehat{\sigma}_j+\epsilon_A},
    \label{eq:group_advantage}
    \vspace{-2mm}
\end{equation}
where $\epsilon_A>0$ is a numerical stabilizer.
This advantage is shared by all newly generated
tokens in response $y^{(j,m)}$.
Because all continuations within a group start from the same intermediate
state, this advantage isolates variation among alternative futures rather
than differences in their preceding prefixes. It therefore provides
the credit for learning which continuations should be made
more or less likely.

\subsection{Across-State Segment Credit}
\label{sec:method_segment}

Group-relative credit distinguishes alternative continuations from the same
state, but it does not indicate how the reference trajectory itself changes
the prospect of eventual success as generation proceeds. \textsc{ForkGRPO} obtains
this complementary signal by interpreting the mean reward of each fork group
as a Monte Carlo estimate of intermediate state value.

Specifically, for fork state $j$, we define
\vspace{-2mm}
\begin{equation}
    \widehat V_j = \bar R_j
    =
    \frac{1}{M}\sum_{m=1}^{M}R_{j,m},
    \quad j=0,\ldots,K.
    \label{eq:mc_value}
    \vspace{-2mm}
\end{equation}
Let $a_{K+1}=L$ denote the terminal boundary. For each segment $\mathcal I_j=\{a_j,\ldots,a_{j+1}-1\}$, we measure the change in expected terminal outcome between its two boundaries:
\vspace{-2mm}
\begin{equation}
    \Delta V_j=
    \begin{cases}
        \widehat V_{j+1}-\widehat V_j,
        & j=0,\ldots,K-1,\\
        R(q,y^{\mathrm{ref}})-\widehat V_j,
        & j=K.
    \end{cases}
    \label{eq:segment_advantage}
    \vspace{-1mm}
\end{equation}

A positive $\Delta V_j$ indicates that, after traversing segment
$\mathcal I_j$, subsequent continuations achieve a higher expected terminal
reward, whereas a negative value indicates that the segment decreases the
estimated prospect of success.

This state-to-state comparison addresses a limitation of response-level
credit assignment. A single response-level advantage assigns identical credit
to every segment of a trajectory, even when different segments have different
effects on the expected terminal reward~\citep{kazemnejad2025vineppo}. If the
underlying segment value changes differ, no single shared advantage can
represent all of them accurately, resulting in an irreducible granularity
error. In contrast, \textsc{ForkGRPO} estimates each $\Delta V_j$ separately. Under our
sampling assumptions, the mean squared estimation error of these segment-level
estimates decreases as $O(1/M)$ and, for sufficiently large $M$, becomes lower
than that of a shared response-level advantage. We formalize this result in
Appendix~\ref{app:mc_estimation}
(Theorem~\ref{thm:segment_mc}).

\vspace{-1mm}
\paragraph{Noise-aware shrinkage.} 
The finer granularity, however, introduces Monte Carlo uncertainty because each state value is estimated from only finitely many continuations. Under the same assumptions, the mean squared error of \mbox{$\Delta V_j$} is bounded by \mbox{$u_j = 2\nu^2/M$} for intermediate segments and \mbox{$\nu^2/M$} for the terminal segment, where \mbox{$\nu^2 = (r_{\max}-r_{\min})^2/4$} denotes the global variance upper bound. We therefore use \mbox{$u_j$} as a noise scale and attenuate value changes that are small relative to their estimation uncertainty:
\begin{equation}
\alpha_j = \frac{(\Delta V_j)^2}{(\Delta V_j)^2 + u_j + \epsilon_s}, \quad A_j^{\text{seg}} = \alpha_j \Delta V_j,
\label{eq:segment_noise}
\end{equation}
where \mbox{$\epsilon_s > 0$} stabilizes the denominator. Consequently, segment value changes that are large relative to their uncertainty are largely preserved, whereas weak, potentially noise-dominated estimates are shrunk toward zero. The properties and potential bias of this shrinkage rule are discussed in Appendix~\ref{app:segment_shrinkage}. The resulting \mbox{$A_j^{\text{seg}}$} serves as the segment-level advantage used to optimize the corresponding portion of the reference trajectory.

\subsection{ForkGRPO Objective}
\label{sec:method_objective}

\textsc{ForkGRPO} combines the two complementary credit signals introduced above.
The group-relative advantage $A_{j,m}^{\mathrm{group}}$ compares alternative
continuations sampled from the same intermediate state, whereas the segment
advantage $A_j^{\mathrm{seg}}$ measures the value change between successive
states along the reference trajectory. We optimize both signals using the same
clipped policy objective, but apply them to different parts of the sampled
trajectories.

For token $i$ of continuation $y^{(j,m)}$, we define the policy ratio
\vspace{-2mm}
\begin{equation}
    \rho_i^{(j,m)}
    =
    \frac{
        \pi_\theta(y_i^{(j,m)}\mid s_i^{(j,m)})
    }{
        \pi_{\mathrm{old}}(y_i^{(j,m)}\mid s_i^{(j,m)})
    },
    \label{eq:policy_ratio}
    \vspace{-2mm}
\end{equation}
where $s_i^{(j,m)}=s_i(q,y^{(j,m)})$.
Following the clipped GRPO objective, we write
\vspace{-2mm}
\begin{equation}
    \ell(\rho,A)
    =
    \min\!\left\{
        \rho A,\,
        \operatorname{clip}(\rho,1-\epsilon,1+\epsilon)A
    \right\},
    \label{eq:clipped_surrogate}
    \vspace{-2mm}
\end{equation}
where $\epsilon$ controls the clipping range.

\paragraph{Group objective.}
For each fork group, $A_{j,m}^{\mathrm{group}}$ is applied only to tokens
generated after its fork state; the shared reference prefix is not replayed
with group-relative credit. The resulting objective is
\vspace{-2mm}
\begin{equation}
\begin{aligned}
    \mathcal L_{\mathrm{group}}
    =
    -\frac{1}{N_{\mathrm{tok}}}
    \sum_{j=0}^{K}
    \sum_{m=1}^{M}
    \sum_{i=a_j}^{L-1}
    \ell\!\left(
        \rho_i^{(j,m)},
        A_{j,m}^{\mathrm{group}}
    \right),
\end{aligned}
\label{eq:group_loss}
\vspace{-2mm}
\end{equation}
where
$N_{\mathrm{tok}}=M\sum_{j=0}^{K}(L-a_j)$
is the total number of newly generated tokens.

\paragraph{Segment objective.}
The segment advantage is instead applied to the corresponding portion of the
reference trajectory. Denoting the reference-policy ratio at position $i$ by
$\rho_i^{\mathrm{ref}}$, we optimize
\vspace{-2mm}
\begin{equation}
    \mathcal L_{\mathrm{seg}}
    =
    -\frac{1}{L}
    \sum_{j=0}^{K}
    \sum_{i\in\mathcal I_j}
    \ell\!\left(
        \rho_i^{\mathrm{ref}},
        A_j^{\mathrm{seg}}
    \right).
    \label{eq:segment_loss}
    \vspace{-2mm}
\end{equation}
Thus, different portions of the same reference response receive different
policy updates according to their estimated changes in continuation value.

The final \textsc{ForkGRPO} objective is
\vspace{-2mm}
\begin{equation}
\mathcal{L}_{\text{ForkGRPO}} = \mathcal{L}_{\text{group}} + \lambda \mathcal{L}_{\text{seg}}, \label{eq:forkgrpo_obj}
\vspace{-2mm}
\end{equation}
where \mbox{$\lambda$} controls the contribution of segment-level credit. The reference response participates in both objectives: its root-group advantage provides within-state relative credit, while \mbox{$A_j^{\text{seg}}$} assigns segment-specific credit along the same trajectory. 

\subsection{ForkGRPO-Fast: Adaptive Rollout and Sparse Replay}
\label{sec:forkgrpo_fast}
\textsc{ForkGRPO} uses conditional continuations for two purposes: exploring alternative
outcomes to estimate credit, and replaying sampled trajectories to compute
policy gradients. These two stages need not use the same computational budget.
We therefore introduce \textsc{ForkGRPO-Fast}, which reduces cost at three
levels: adaptive rollout avoids unnecessary continuation sampling, sparse
branch replay reduces the number of continuations revisited with gradients,
and state subsampling further reduces the number of replayed states within
each retained continuation.

\paragraph{Adaptive rollout.}
For each root or fork state, we first sample \mbox{$M_{\text{pilot}} < M$} continuations. If every pilot group is homogeneous and all groups share the same observed reward, both group-relative and segment credit vanish under the pilot samples, so we stop rollout collection for that prompt. Otherwise, only groups with reward variation are expanded to the full size \mbox{$M$}:
\begin{equation}
M_j = \begin{cases} M, & \text{if group } j \text{ reward varies}, \\ M_{\text{pilot}}, & \text{otherwise.} \end{cases}
\end{equation}
All value estimates and group statistics use the samples actually collected. Because group expansion depends on observed pilot outcomes, adaptive rollout introduces a slight selection bias relative to fixed-$M$ sampling (Appendix~\ref{app:fast_adaptive}). However, experimentally this approximation reduces rollout overhead by $1.76\times$ while maintaining task accuracy. 

\paragraph{Sparse branch replay.}
Adaptive rollout reduces exploration cost; once rewards have been collected,
we can separately reduce the cost of gradient replay. For binary-reward tasks,
responses with the same reward within a group also share the same standardized
group-relative advantage. It is therefore unnecessary to replay every branch
from the same reward class. We uniformly retain a small number of
representatives and weight them by their class multiplicity. We consider
\emph{1+1} replay, retaining at most one positive and one negative branch. Conditioned on the
collected rollout set, this multiplicity correction preserves the full group
gradient in expectation. We apply this optimization only to binary-reward
tasks; experiments with non-binary code rewards retain all sampled branches.

\paragraph{State subsampling.}
For each retained branch, we replay only a fraction $\rho$ of the generated
states after the fork position. Random stratified subsampling uses inverse
retention weighting.
The two replay mechanisms reduce local replay cost approximately as
$
    \frac{C_{\mathrm{fast}}}{C_{\mathrm{full}}}
    \approx
    \frac{r_b}{M}\rho,
    \label{eq:fast_cost}
$
where $r_b$ is the number of replayed branches. With $M=8$, \emph{1+1}
replay, and $\rho=0.25$, this retains at most $6.25\%$ of the original
branch--state replay operations.

Overall, \textsc{ForkGRPO-Fast} separates the budget used to discover useful
credit from the computation used to optimize it: adaptive rollout reduces
unnecessary exploration, while branch and state replay sparsify the subsequent
gradient computation.

\section{Experiments}
\label{sec:experiments}
\vspace{-2mm}

\subsection{Experimental Setup}
\label{sec:exp_setup}

We use LLaDA-8B-Instruct~\citep{nie2025llada} as the base model and evaluate
on GSM8K~\citep{cobbe2021gsm8k} and MATH-500~\citep{lightman2023lets} for
mathematical reasoning, and HumanEval~\citep{chen2021evaluating} and
MBPP~\citep{austin2021program} for code generation. We report exact-match
accuracy for mathematics and pass@1 for code, with a fixed response length
of $256$ tokens.
Full reward definitions, decoding settings, \textsc{ForkGRPO}/\textsc{ForkGRPO-Fast} 
configurations, and optimization hyperparameters are provided in
Appendix~\ref{app:experimental_details}.

\subsection{Main Results}
\label{sec:main_results}

Table~\ref{tab:main_results} compares \textsc{ForkGRPO} with prior dLLM reasoning methods at a generation length of 256 tokens. For JustGRPO, we use the LoRA results reported in its official repository,\footnote{\url{https://github.com/LeapLabTHU/JustGRPO}} whose LoRA configuration we also adopt for \textsc{ForkGRPO}. Results for the remaining baselines are taken from their corresponding papers.

\begin{table}[t]
\centering
\fontsize{7.4pt}{6pt}\selectfont
\setlength{\tabcolsep}{0.5pt}
\caption{
\textbf{Main results at generation length 256.}
We report accuracy (\%) on GSM8K and MATH-500 and pass@1 (\%) on HumanEval
and MBPP. 
$\dagger$ indicates architecture modifications;
$\ddagger$ indicates training with substantially more private data.
}
\vspace{-2mm}
\label{tab:main_results}
\begin{tabular}{lcccc}
\toprule
\textbf{Method}
& \textbf{GSM8K}
& \textbf{MATH-500}
& \textbf{HumanEval}
& \textbf{MBPP} \\
\midrule
LLaDA-1.5$^{\ddagger}$ \citep{zhu2026llada}
& 83.3 & --   & 39.6 & 39.9 \\
LLaDOU$^{\dagger}$ \citep{huang2026reinforcing}
& 88.1 & 41.1 & 59.1 & 51.6 \\
\midrule
d1 \citep{zhao2025d1}
& 81.1 & 38.6 & --   & --   \\
DMPO \citep{zhu2025dmpo}
& 82.4 & 38.2 & -- & -- \\
wd1 \citep{tang2026wd1}
& 80.8 & 34.4 & --   & --   \\
d-TreeRPO \citep{pan2026dtreerpo}
& 81.2 & 37.7 & --   & --   \\
ESPO \citep{ou2026principled}
& 82.3 & 39.0 & 42.1 & 44.6 \\
GDPO \citep{rojas2026improving}
& 82.8 & 39.6 & 39.6 & \textbf{50.6} \\
PAPO \citep{shao2026papo}
& 82.4 & 35.6 & 34.8 & 43.2 \\
StableDRL \citep{zhong2026stabledrl}
& 86.2 & 45.2 & -- & -- \\
SPG \citep{wang2026spg}
& 86.1 & 40.0 & --   & --   \\
JustGRPO \citep{ni2026justgrpo}
& 89.6 & 46.0 & 50.6 & 48.6 \\
\midrule
\textbf{ForkGRPO}
& \textbf{90.4}
& 46.6
& \textbf{51.8}
& 49.0 \\
\textbf{ForkGRPO-Fast}
& 90.3
& \textbf{46.8}
& \textbf{51.8}
& 48.8 \\
\bottomrule
\end{tabular}
\vspace{-3mm}
\end{table}

\textsc{ForkGRPO} improves over JustGRPO on all four benchmarks, achieving 90.4\%, 46.6\%, 51.8\%, and 49.0\% on GSM8K, MATH-500, HumanEval, and MBPP, respectively. These correspond to gains of $0.8$, $0.6$, $1.2$,
and $0.4$ percentage points over JustGRPO.
The improvement is observed on both mathematical reasoning and code
generation, suggesting that conditional branching and finer-grained credit
assignment complement the sequential RL formulation rather than benefiting
only a particular task family.
\textsc{ForkGRPO-fast} preserves nearly all of this performance despite its reduced
training cost.

\subsection{Ablation Studies}
\label{sec:ablation}
All ablations in this section are conducted on GSM8K. Each variant starts from the same public JustGRPO
checkpoint and undergoes 10 additional optimizer updates under each of five training seeds. We report mean accuracy $\pm$ sample standard deviation across the five runs. This shared initialization and fixed update budget enable controlled comparisons of component contributions within our computational budget.
Further details are provided in
Appendix~\ref{app:ablation_protocol}, while the ablation of
\textsc{ForkGRPO-Fast} is reported separately in
Appendix~\ref{sec:fast_ablation}.
\subsubsection{Forking and Credit Assignment}
\label{sec:ablation_components}

We compare four variants in
Table~\ref{tab:component_ablation}.
JustGRPO samples 16 complete responses per prompt
without forking.
All forked variants use Entropy Boundary selection
with $K=3$ fork states and $M=8$ responses per group,
yielding 32 scored completions per prompt,
as in the main experiments.
Their sampling configuration is fixed, and only
the training objective varies.
\textbf{Fork + Group} uses only
$\mathcal L_{\mathrm{group}}$ to train on root
responses and forked suffixes.
\textbf{Fork + Segment} uses continuation rewards
to estimate segment advantages but updates only
the reference response through
$\mathcal L_{\mathrm{seg}}$.
\textbf{ForkGRPO} combines both objectives.

Full \textsc{ForkGRPO} achieves the highest mean accuracy,
outperforming either objective alone.
This supports the complementary roles of
within-state group comparisons and segment-level
credit along the reference trajectory.
\begin{table}[t]
\centering
\fontsize{9pt}{8pt}\selectfont
\setlength{\tabcolsep}{4pt}
\renewcommand{\arraystretch}{1.15}
\caption{
\textbf{Component ablation on GSM8K.}
Forked variants share the same sampling
configuration and differ only in their
training objective.
}
\label{tab:component_ablation}
\begin{tabular}{@{}lccc@{}}
\toprule
\textbf{Method}
& \textbf{Fork}
& \textbf{Credit}
& \textbf{Acc. (\%)} \\
\midrule
JustGRPO
& No
& group
& $88.81 \pm 0.53$ \\

Fork + Group
& Yes
& group
& $89.69 \pm 0.21$ \\

Fork + Segment
& Yes
& segment
& $89.45 \pm 0.29$ \\

\textbf{ForkGRPO}
& Yes
& segment + group
& $\mathbf{90.31} \pm 0.06$ \\
\bottomrule
\end{tabular}
\end{table}

\subsubsection{Fork-State Selection}
\label{sec:ablation_anchor}

We compare four fork-state selection strategies
while fixing the full \textsc{ForkGRPO} objective and sampling
configuration ($K=3$, $M=8$).
Each strategy selects one branching position from
each of the three 32-token windows:
$[32,64)$, $[64,96)$, and $[96,128)$.
\textbf{Fixed} selects the center of each window.
\textbf{Random Boundary} samples uniformly from
the boundary candidates defined in
Section~\ref{sec:method_forking}.
\textbf{Max Entropy} selects the highest-entropy
position within each window without a boundary constraint.
\textbf{Entropy Boundary}, our proposed strategy,
selects the highest-entropy boundary candidate.

\begin{table}[t]
\centering
\fontsize{9pt}{8pt}\selectfont
\setlength{\tabcolsep}{4pt}
\renewcommand{\arraystretch}{1.15}
\caption{
\textbf{Fork-state selection on GSM8K.}
All variants differ only in how branching
positions are selected within each window.
}
\label{tab:anchor_ablation}
\begin{tabular}{@{}lccc@{}}
\toprule
\textbf{Selection}
& \textbf{Boundary}
& \textbf{Entropy}
& \textbf{Acc. (\%)} \\
\midrule
Fixed
& No & No
& $89.10 \pm 0.33$ \\

Random Boundary
& Yes & No
& $89.25 \pm 0.21$ \\

Max Entropy
& No & Yes
& $89.67 \pm 0.16$ \\

Entropy Boundary
& Yes & Yes
& $\mathbf{90.31} \pm 0.06$ \\
\bottomrule
\end{tabular}
\end{table}
Table~\ref{tab:anchor_ablation} shows that
Entropy Boundary achieves the highest mean accuracy,
outperforming both Random Boundary and Max Entropy.
This supports combining predictive uncertainty
with textual boundaries to target uncertain
reasoning transitions, where alternative continuations
may explore different solution paths.

\subsubsection{Credit Density and Sampling Allocation}
\label{sec:ablation_km}
\label{sec:branch_depth_analysis}

\begin{figure}[t]
    \centering
    \includegraphics[width=0.9\columnwidth]{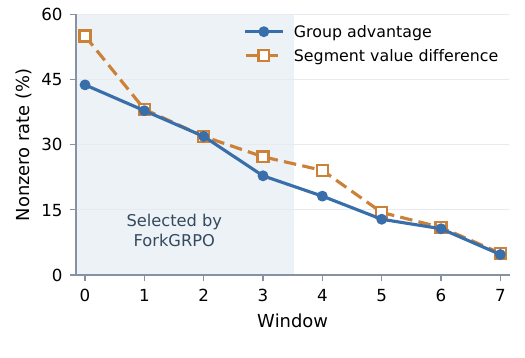}
    \vspace{-6mm}
    \caption{
    Nonzero rates of group advantages and window segment value differences, averaged
    over the first 10 training steps on GSM8K with LLaDA-8B-Instruct.
    Both signals become substantially sparser at later segments.
    Shading indicates the first four 32-token windows covered by the default
    \textsc{ForkGRPO} configuration ($K=3$).
    }
    \label{fig:branch_depth_nonzero}
\end{figure}

\paragraph{Credit density across windows.} We next examine how the usefulness of branching changes across the fixed window defined in Section~\ref{sec:method_forking}. With $W=32$, each candidate fork window spans 32 consecutive response positions.
Figure~\ref{fig:branch_depth_nonzero} reports the fraction of training
examples with nonzero group advantages and nonzero segment value differences
at each window. Both signals become progressively sparser in later
windows, indicating that forks placed later in the trajectory yield fewer
effective policy-gradient signals~\citep{yu2025dapo}.
This trend motivates concentrating the branching budget in the early portion
of the response. Our default configuration ($K=3$) selects one intermediate fork state
from each of the windows $[32,64)$, $[64,96)$, and $[96,128)$.
These three fork states, together with the root state at position $0$,
lie within the first four 32-token windows highlighted in
Figure~\ref{fig:branch_depth_nonzero}, where credit signals remain
relatively dense.

\paragraph{Sampling allocation.}
We next study how performance changes when the same number of responses
is distributed across different numbers of branching states.
Table~\ref{tab:km_ablation} keeps $(K+1)M=32$ scored responses while varying
$K$ and $M$. All candidate windows retain the fixed width $W=32$; for a given
$K$, we use the first $K$ windows following the root. The best performance is obtained with $K=3,M=8$, reaching a mean accuracy of $90.31\%$. Smaller $K$ provides less intermediate-state coverage,
whereas larger $K$ reduces the number of continuations per state and extends
branching into later windows with sparser credit signals. These results support
$K=3,M=8$ as a favorable balance between branching coverage and per-state
estimation.

\begin{table}[t]
\centering
\fontsize{9pt}{8pt}\selectfont
\setlength{\tabcolsep}{12pt}
\renewcommand{\arraystretch}{1.15}

\caption{
\textbf{Sampling allocation on GSM8K.}
}
\vspace{-2mm}
\label{tab:km_ablation}

\begin{tabular}{cccc}
\toprule
\textbf{$K$}
& \textbf{$M$}
& \textbf{Groups}
& \textbf{Acc. (\%)} \\
\midrule
0 & 32 & 1
& $89.14 \pm {\mathrm{0.38}}$ \\

1 & 16 & 2
& $89.36 \pm {\mathrm{0.19}}$ \\

\textbf{3} & \textbf{8} & \textbf{4}
& $\mathbf{90.31} \pm 0.06$ \\

7 & 4 & 8
& $89.93 \pm {\mathrm{0.11}}$ \\
\bottomrule
\end{tabular}
\vspace{-4mm}
\end{table}

\subsection{Sampling Efficiency}
\label{sec:efficiency}

We compare the sampling efficiency of \textsc{ForkGRPO} with the officially
reported JustGRPO-LoRA baseline on GSM8K.
Both methods use the same LoRA configuration,
base model, training data, optimizer settings,
global prompt batch size, response length,
and evaluation protocol to ensure a fair comparison.

JustGRPO-LoRA samples 16 full responses per
prompt and trains for 200 steps.
With $L=256$, each prompt requires
$16L=4096$ newly generated tokens.
\textsc{ForkGRPO} samples 8 initial responses and
8 continuations from each of 3 branching
states, and trains for 100 steps.

Our branching windows give
$a_1\in[32,63]$, $a_2\in[64,95]$, and
$a_3\in[96,127]$.
The number of newly generated tokens per
prompt therefore satisfies
\vspace{-2mm}
\begin{equation}
    N_{\mathrm{tok}}
    =8\left(4L-\sum_{j=1}^{3}a_j\right)
    \in[5912,6656].
    \vspace{-2mm}
\end{equation}
With equal prompt batch sizes, the cumulative
generation budget of \textsc{ForkGRPO}, normalized to
JustGRPO-LoRA, lies in the range
\vspace{-2mm}
\begin{equation}
    \left[
    \frac{100\times5912}{200\times4096},
    \frac{100\times6656}{200\times4096}
    \right]
    \approx [0.7217,0.8125].
\vspace{-1mm}
\end{equation}
Even the upper bound is below the baseline budget,
corresponding to a reduction of 18.75--27.83\%
in cumulative generated tokens.

\begin{table}[t]
\centering
\caption{
\textbf{Sampling efficiency on GSM8K.}
Both methods use LoRA.
Total token budgets are normalized to
JustGRPO-LoRA; ranges denote analytical bounds.
}
\vspace{-2mm}
\label{tab:efficiency}
\small
\renewcommand{\arraystretch}{1.15}
\fontsize{8pt}{7pt}\selectfont
\setlength{\tabcolsep}{4pt}
\begin{tabular*}{\columnwidth}{
@{\extracolsep{\fill}}lcc@{}
}
\toprule
\textbf{Metric}
& \textbf{JustGRPO}
& \textbf{ForkGRPO} \\
\midrule
Training updates
& 200
& 100 \\
Scored responses / prompt
& 16
& 32 \\
New tokens / prompt
& 4096
& 5912--6656 \\
Total token budget
& 1.0000
& $[0.7217,\,0.8125]$ \\
\midrule
GSM8K accuracy (\%)
& 89.6
& \textbf{90.4} \\
\bottomrule
\end{tabular*}
\vspace{-4mm}
\end{table}


\textsc{ForkGRPO} achieves 90.4\% accuracy on GSM8K,
surpassing JustGRPO-LoRA by 0.8 percentage points while using
18.75--27.83\% fewer newly generated tokens.
These results show that conditional branching can provide finer-grained
exploration without requiring a larger cumulative generation budget.

\textsc{ForkGRPO-fast} can further reduce the generation budget through adaptive rollout. Each continuation
group starts with $M_{\mathrm{pilot}}=4$ samples and is expanded to $M=8$ only when the pilot samples exhibit reward variation. Its exact token budget is therefore data dependent: stronger base-model performance or easier tasks may produce more homogeneous groups and require fewer expansions, whereas more challenging tasks may approach the full \textsc{ForkGRPO} budget. Thus, \textsc{ForkGRPO-fast} uses no more generation than \textsc{ForkGRPO} and typically reduces it, while retaining comparable task performance.

\section{Conclusion}
\label{sec:conclusion}

We introduced \textsc{ForkGRPO}, an RL framework that integrates
intermediate-state exploration with fine-grained credit
assignment for masked dLLMs. \textsc{ForkGRPO} branches from high-entropy states along a
reference trajectory, prioritizing textual boundaries
and reusing shared prefixes. The resulting continuations
provide complementary learning signals through within-state
group comparisons and Monte Carlo estimates of segment
value changes. Experiments across four benchmarks
demonstrate improvements in mathematical reasoning and
code generation, while ablations support the contributions
of both learning signals and the proposed branching strategy.
\textsc{ForkGRPO-fast} further reduces rollout and gradient replay
costs, achieving substantial training acceleration on
GSM8K with negligible accuracy loss. These results support
conditional branching and finer-grained credit assignment
as practical tools for improving the accuracy and efficiency
of dLLM post-training.

\section*{Limitations}

The computational cost of RL post-training limits the
scope of our empirical evaluation.
Our main results for \textsc{ForkGRPO} and \textsc{ForkGRPO-fast} are
based on a single training run per method and task,
so their variability across training seeds remains
unmeasured.
We complement these experiments with GSM8K ablations
averaged over 5 training seeds using short continuation
runs from a shared public JustGRPO checkpoint.
This protocol enables controlled comparisons of
individual components within our computational budget.
Extending these comparisons to full training runs
across multiple seeds would provide a more comprehensive
assessment of training stability and component
contributions.





\bibliography{references} 

\newpage
\appendix
\section{Why ForkGRPO for Masked Diffusion Language Models?}
\label{app:why_mdm}

The core mechanism of \textsc{ForkGRPO}, branching from an intermediate state,
sampling multiple conditional continuations, and using their terminal rewards
for finer-grained credit assignment, is not inherently restricted to masked
diffusion language models (MDMs). In principle, an autoregressive (AR) model
could similarly branch from a prefix and estimate the value of that prefix
from multiple sampled suffixes. Our focus on MDMs is therefore not based on
an exclusivity claim, but on a training problem that is particularly natural
and consequential in the MDM setting.

\paragraph{MDMs expose a distinctive exploration problem.}
Unlike AR models, whose generation order is intrinsically left-to-right, MDMs
can resolve masked positions in flexible orders. Recent work shows that this
flexibility can be detrimental to reasoning during RL: uncertain decisions may
be deferred until surrounding tokens constrain their possible outcomes,
reducing opportunities to explore alternative reasoning paths
\citep{ni2026justgrpo}. Sequential unmasking alleviates this issue by forcing
such decisions to be resolved before later context is available. \textsc{ForkGRPO} 
builds directly on this observation. Once an uncertainty-sensitive
intermediate state is exposed along the sequential rollout, \textsc{ForkGRPO} revisits
that state and explicitly samples multiple possible futures rather than relying
only on independent prompt-level trajectories.

\paragraph{Conditional forking is especially natural for partially denoised states.}
At generation position $i$, an MDM state has the form
given in Eq.~\eqref{eq:sequential_state}, which explicitly represents a partially resolved sequence and an unresolved remainder. Forking from this state probes alternative futures under the same committed
content, enabling both within-state comparison and estimation of how the
expected terminal reward changes as additional content becomes fixed.

More generally, this value-based view is not restricted to left-to-right
prefixes. Native MDM states may contain committed tokens at arbitrary
positions, with unresolved regions conditioned on context from the left, the
right, or both. Conditional rollouts can therefore quantify the value change
associated with committing any newly resolved portion of the sequence, rather
than only extending a prefix as in an AR model. While we use sequential
unmasking here to obtain controlled reasoning trajectories, the underlying
credit-assignment view naturally extends to more general MDM decoding orders.

\paragraph{Relation to autoregressive models.}
The core idea of forking is not restricted to MDMs: an AR model
could similarly branch from a prefix, sample multiple suffixes, and estimate
prefix-level value changes. Our focus on MDMs instead follows from the
structure of their generation process. Flexible decoding orders can defer
high-uncertainty decisions, making intermediate-state exploration particularly
relevant, while partially denoised states are not inherently limited to
left-to-right prefixes and may contain committed context at arbitrary
positions. \textsc{ForkGRPO} therefore studies conditional exploration and credit
assignment in this broader state space, using sequential unmasking as a
controlled instance in our experiments. 
We expect \textsc{ForkGRPO} to be plausibly applicable to AR models as well, although
its theoretical motivation is less distinctive in the AR setting, where
generation is already restricted to left-to-right prefix states.

\section{Theoretical Analysis of Segment Credit}
\label{app:mc_estimation}

We analyze the estimation of segment value
changes under the fixed rollout policy.
The comparison concerns raw differences
$\Delta V_j$, before shrinkage, and a
response-level advantage shared across segments.

\subsection{Estimation Guarantee}

\begin{theorem}[Accuracy of segment credit]
\label{thm:segment_mc}
Fix a prompt $q$, a policy $\pi_{\mathrm{old}}$,
and integers $K\geq 1$ and $M\geq 2$.
Assume that the initial $M$ responses are i.i.d.\
under $\pi_{\mathrm{old}}$, with the first response
designated as $y^{\mathrm{ref}}$.
The branching positions $a_1<\cdots<a_K$ are
selected from the reference response alone using
a fixed rule that does not depend on $M$.
Conditional on the initial responses, all
auxiliary continuations are sampled independently
under $\pi_{\mathrm{old}}$ from their selected
states.
Assume rewards lie in $[r_{\min},r_{\max}]$, and
let $\nu^2=(r_{\max}-r_{\min})^2/4$.

For $j=0,\ldots,K$, let $V_j$ be the expected
terminal reward from the state before position
$a_j$ in the reference response, where $a_0=0$.
Let $\widehat V_j$ be the mean reward of group
$j$, including all $M$ initial responses when
$j=0$.
Let $a_{K+1}=L$ and set
$V_{K+1}=\widehat V_{K+1}=R(q,y^{\mathrm{ref}})$.
For each segment
$\mathcal I_j=\{a_j,\ldots,a_{j+1}-1\}$, define
\begin{equation}
\begin{aligned}
    d_j&=V_{j+1}-V_j,\\
    \Delta V_j&=\widehat V_{j+1}-\widehat V_j,
    \quad j=0,\ldots,K.
\end{aligned}
\end{equation}
For every such segment,
\begin{equation}
\begin{aligned}
    \mathbb E[\Delta V_j-d_j\mid q]&=0,\\
    \mathbb E[(\Delta V_j-d_j)^2\mid q]
    &\leq u_j,
\end{aligned}
\label{eq:mc_error}
\end{equation}
where $u_j=2\nu^2/M$ for $j=0,\ldots,K-1$
and $u_K=\nu^2/M$, as defined in Section~\ref{sec:method_segment}.

To compare these estimates with a shared
response-level advantage, define
\begin{equation}
\begin{aligned}
    \bar d&=\frac{1}{K+1}\sum_{j=0}^{K}d_j,\\
    H(q)&=\mathbb E\!\left[
      \frac{1}{K+1}\sum_{j=0}^{K}(d_j-\bar d)^2
      \,\middle|\,q\right].
\end{aligned}
\label{eq:segment_heterogeneity}
\end{equation}
For any scalar advantage $C$ shared across
these segments,
\begin{equation}
    \mathbb E\!\left[
      \frac{1}{K+1}\sum_{j=0}^{K}(C-d_j)^2
      \,\middle|\,q\right]\geq H(q).
    \label{eq:shared_credit_error}
\end{equation}
The advantage $C$ may depend on the reference
and arbitrarily many prompt rollouts.
In contrast,
\begin{equation}
    \mathbb E\!\left[
      \frac{1}{K+1}\sum_{j=0}^{K}
      (\Delta V_j-d_j)^2\,\middle|\,q\right]
    \leq \frac{2\nu^2}{M}.
    \label{eq:segment_average_error}
\end{equation}
Consequently, if $H(q)>0$ and
\begin{equation}
    M>\frac{2\nu^2}{H(q)},
    \label{eq:segment_comparison_condition}
\end{equation}
the raw segment estimates have strictly lower
average mean squared error than any advantage
shared across these segments.
\end{theorem}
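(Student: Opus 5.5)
The plan is to condition on the reference response (equivalently on the $\sigma$-field $\mathcal F$ generated by $y^{\mathrm{ref}}$). Since the branching positions are a fixed function of $y^{\mathrm{ref}}$, every $V_j$ and $d_j$ becomes deterministic under this conditioning.

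For $j\ge1$, the $M$ continuations of group $j$ are, given $\mathcal F$, i.i.d.\ from the state $s_{a_j}^{\mathrm{ref}}$. Their rewards therefore have conditional mean $V_j$ and conditional variance at most $\nu^2$, by Popoviciu's inequality for variables supported on $[r_{\min},r_{\max}]$. Hence $\mathbb E[\widehat V_j\mid\mathcal F]=V_j$ and $\operatorname{Var}(\widehat V_j\mid\mathcal F)\le\nu^2/M$.

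The root group needs care, because it contains $y^{\mathrm{ref}}$ itself. Conditioning on $y^{\mathrm{ref}}$ would bias $\widehat V_0$, so I would not condition there. Instead I would define $V_0$ as the unconditional value of the prompt, which is deterministic given $q$. Then $\widehat V_0$ is an average of $M$ i.i.d.\ rewards, so it is unbiased for $V_0$ with variance at most $\nu^2/M$.

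\textbf{Unbiasedness.} For $j=1,\dots,K-1$, the tower property gives $\mathbb E[\Delta V_j-d_j\mid q]=\mathbb E[\mathbb E[\cdot\mid\mathcal F]\mid q]=0$. For $j=K$, $\widehat V_{K+1}=V_{K+1}=R(q,y^{\mathrm{ref}})$, so $\Delta V_K-d_K=-(\widehat V_K-V_K)$, which has zero conditional mean. For $j=0$, write
\[
\Delta V_0-d_0=(\widehat V_1-V_1)-(\widehat V_0-V_0).
\]
The first term has zero mean by the tower property and the second by the i.i.d.\ root sampling.

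\textbf{MSE bound.} Set $e_j=\widehat V_j-V_j$. Using $(x-y)^2\le 2x^2+2y^2$ would only give $4\nu^2/M$, so I would expand exactly:
\[
\mathbb E[(e_{j+1}-e_j)^2]=\mathbb E[e_{j+1}^2]+\mathbb E[e_j^2]-2\,\mathbb E[e_{j+1}e_j].
\]
The cross term vanishes. For $1\le j<K$, $e_j$ is $\mathcal F$-measurable up to the independent group-$j$ noise, and $e_{j+1}$ has zero mean given $\mathcal F$ together with the group-$j$ draws, by the conditional independence of the auxiliary groups. For $j=0$, $e_1$ has zero mean given all initial responses, and $e_0$ is measurable with respect to them. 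This step is the main obstacle. It needs the assumption that continuations are sampled independently given the initial responses, and it needs $V_0$ to be read as the unconditional root value. Each squared term is at most $\nu^2/M$ (conditionally, then averaged), so the total is at most $2\nu^2/M$. For the terminal segment only $e_K$ remains, giving $\nu^2/M$. Averaging over $j$, every term is at most $2\nu^2/M$, which is \eqref{eq:segment_average_error}.

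\textbf{Shared-advantage lower bound.} For any realization and any scalar $C$, the bias--variance identity
\[
\frac1{K+1}\sum_j(C-d_j)^2=(C-\bar d)^2+\frac1{K+1}\sum_j(d_j-\bar d)^2
\]
gives a lower bound by the empirical dispersion. Taking conditional expectation yields \eqref{eq:shared_credit_error}, and this holds whatever $C$ depends on. Finally, $M>2\nu^2/H(q)$ gives $2\nu^2/M<H(q)$. Chaining this with \eqref{eq:segment_average_error} and \eqref{eq:shared_credit_error} shows that the raw segment estimates have strictly smaller average MSE than any shared advantage.
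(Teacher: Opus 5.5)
Your proposal is correct and follows essentially the same route as the paper. Both arguments set $e_j=\widehat V_j-V_j$ with $V_0$ the prompt-level value, and kill the cross term $\mathbb E[e_je_{j+1}\mid q]$ by conditioning, using for $j=0$ that $e_0$ is determined by the initial responses. Both bound each $\mathbb E[e_j^2\mid q]$ by $\nu^2/M$ via the range bound and use the same decomposition $\frac{1}{K+1}\sum_j(C-d_j)^2=(C-\bar d)^2+\frac{1}{K+1}\sum_j(d_j-\bar d)^2$. The only cosmetic difference is that the paper conditions on the full initial group $Y_0$ throughout, while you condition on $y^{\mathrm{ref}}$ alone for the auxiliary groups. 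This is equivalent here because the auxiliary sampling law depends only on the reference.
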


All expectations conditioned only on $q$
average over reference, initial-group, and
auxiliary sampling.
The target is the vector of segment value
changes $(d_0,\ldots,d_K)$.
If $H(q)=0$, the estimation bound still holds,
but the strict comparison does not apply.

\subsection{Proof}

\begin{proof}
Let $Y_0=(y^{(0,1)},\ldots,y^{(0,M)})$ denote
the initial responses and define
$e_j=\widehat V_j-V_j$ for $j=0,\ldots,K+1$.
By the terminal-value definition, $e_{K+1}=0$.
Conditioning on $Y_0$ fixes the reference
and all branching states.
For $1\leq j\leq K$, the auxiliary rewards
are conditionally i.i.d., so
\begin{equation}
\begin{aligned}
    \mathbb E[e_j\mid q,Y_0]&=0,\\
    \mathbb E[e_j^2\mid q,Y_0]
    &=\frac{\sigma_j^2}{M}
    \leq\frac{\nu^2}{M},
\end{aligned}
\label{eq:appendix_conditional_mc}
\end{equation}
where $\sigma_j^2$ is the conditional reward
variance.
The inequality follows because a reward
supported on $[r_{\min},r_{\max}]$ has variance
at most $\nu^2$.

At the prompt state, the initial rewards are
i.i.d.\ with variance $\sigma_0^2$. Thus,
\begin{equation}
\begin{aligned}
    \mathbb E[e_0\mid q]&=0,\\
    \mathbb E[e_0^2\mid q]
    &=\frac{\sigma_0^2}{M}
    \leq\frac{\nu^2}{M}.
\end{aligned}
\end{equation}

For $1\leq j<K$, conditional independence of
the auxiliary batches and their conditional
zero means imply
$\mathbb E[e_je_{j+1}\mid q,Y_0]=0$.
For $j=0$, the error $e_0$ is fixed given
$Y_0$, while
$\mathbb E[e_1\mid q,Y_0]=0$.
Taking expectations therefore gives
\begin{equation}
    \mathbb E[e_je_{j+1}\mid q]=0,
    \quad j=0,\ldots,K-1.
\end{equation}
For $j=0,\ldots,K-1$, since
$\Delta V_j-d_j=e_{j+1}-e_j$,
\begin{equation}
\begin{aligned}
    \mathbb E[\Delta V_j-d_j\mid q]&=0,\\
    \mathbb E[(\Delta V_j-d_j)^2\mid q]
    &=\mathbb E[e_{j+1}^2\mid q]
      +\mathbb E[e_j^2\mid q]\\
    &\leq\frac{2\nu^2}{M}.
\end{aligned}
\end{equation}
For the final segment,
$\Delta V_K-d_K=-e_K$, so
\begin{equation}
\begin{aligned}
    \mathbb E[\Delta V_K-d_K\mid q]&=0,\\
    \mathbb E[(\Delta V_K-d_K)^2\mid q]
    &\leq\frac{\nu^2}{M}=u_K.
\end{aligned}
\end{equation}
This proves Eq.~\eqref{eq:mc_error}.
Averaging these bounds over the $K+1$ segments
and using $u_j\leq 2\nu^2/M$ for every $j$
gives Eq.~\eqref{eq:segment_average_error}.

For the shared-advantage comparison, the
following identity holds for every realized
reference and every scalar $C$:
\begin{equation}
\begin{aligned}
    \frac{1}{K+1}\sum_{j=0}^{K}(C-d_j)^2
    &=\frac{1}{K+1}\sum_{j=0}^{K}(d_j-\bar d)^2\\
    &\quad+(C-\bar d)^2.
\end{aligned}
\end{equation}
The cross term vanishes because
$\sum_{j=0}^{K}(d_j-\bar d)=0$.
Dropping the final nonnegative term and
taking expectations proves
Eq.~\eqref{eq:shared_credit_error}, without
requiring independence between $C$ and the
reference.

Because the reference is the first i.i.d.\
response and its branching positions depend
only on it through a rule independent of $M$,
$H(q)$ does not change with $M$.
If $H(q)>0$, condition
\eqref{eq:segment_comparison_condition}
makes the upper bound for the raw segment
estimates strictly smaller than the lower
bound for every shared advantage.
\end{proof}
\subsection{Interpretation and Scope}

The comparison measures how accurately a credit
signal represents each segment's change in
expected terminal reward. When these changes
differ across segments, a shared response-level
advantage cannot represent all of them accurately.
Increasing the number of complete rollouts can
improve estimation of the shared advantage, but
does not resolve this limitation in granularity.
Sampling continuations from intermediate states
allows the corresponding values, and hence
segment-level value changes, to be estimated
separately.

The result concerns estimation error relative
to segment-level value changes. It does not
compare against methods that also construct
segment-specific estimates from complete
rollouts, establish policy-gradient unbiasedness,
or guarantee improved training performance.

Including the reference response in the initial
group does not invalidate the prompt-conditioned
bound. Conditional on a fixed reference, however,
\begin{equation}
    \mathbb E[e_0 \mid q,y^{\mathrm{ref}}]
    = \frac{R(q,y^{\mathrm{ref}})-V_0}{M}.
\end{equation}
The unbiasedness statement for the first segment
in Eq.~\eqref{eq:mc_error} therefore also averages
over reference sampling.
Conditional on the reference and selected fork
states, auxiliary continuations are sampled
independently from their corresponding states.
Selecting fork states using entropy along the
reference thus preserves the unbiasedness of
these auxiliary value estimates; no stopping-time
assumption is required.

The raw segment differences telescope:
\begin{equation}
    \sum_{j=0}^{K}\Delta V_j
    = R(q,y^{\mathrm{ref}})-\widehat V_0.
\end{equation}
The segment advantages after shrinkage need not
satisfy this identity.

\subsection{Noise-Aware Shrinkage}
\label{app:segment_shrinkage}

The scale $u_j$ bounds the prompt-averaged mean
squared estimation error; it is not the exact
conditional variance for an individual reference.
We use this conservative noise scale to attenuate
small estimated value changes:
\begin{equation}
\begin{aligned}
    \alpha_j
    &= \frac{(\Delta V_j)^2}
      {(\Delta V_j)^2+u_j+\epsilon_s},\\
    A_j^{\mathrm{seg}}
    &= \alpha_j \Delta V_j,
\end{aligned}
\end{equation}
where $\epsilon_s>0$.
Since $0\leq\alpha_j<1$,
\begin{equation}
    |A_j^{\mathrm{seg}}|
    \leq |\Delta V_j|.
\end{equation}
Nonzero value changes retain their signs.
When $(\Delta V_j)^2$ is small relative to
$u_j+\epsilon_s$,
\begin{equation}
    A_j^{\mathrm{seg}}
    \approx \frac{(\Delta V_j)^3}
    {u_j+\epsilon_s}.
\end{equation}
When $(\Delta V_j)^2$ is large relative to this
scale, $\alpha_j$ approaches one and the raw
value change is largely preserved.

For a segment with zero true value change,
shrinkage reduces the squared magnitude of
spurious credit relative to the raw estimate
$\Delta V_j$. For nonzero true changes, it can
introduce bias, so no uniform mean squared error
improvement is claimed for $A_j^{\mathrm{seg}}$.
Theorem~\ref{thm:segment_mc} applies to the raw
estimates $\Delta V_j$, rather than the nonlinear
advantages after shrinkage.

Shrinkage affects only segment-level credit.
The within-state group-relative advantages
remain unchanged, and $\lambda$ controls the
contribution of the segment objective to the
combined loss.

\subsection{Distributed Loss Normalization}
\label{app:loss_normalization}

At accumulation microstep $h$, worker $c$
processes one prompt with $(K+1)M$ responses.
For standard \textsc{ForkGRPO}, the number of newly
generated tokens for this prompt is
\begin{equation}
    N_{\mathrm{tok}}^{(h,c)}
    = M\sum_{j=0}^{K}
      \bigl(L-a_j^{(h,c)}\bigr),
\end{equation}
where $a_0^{(h,c)}=0$.
This count includes tokens from responses with
zero group-relative advantage.
Copied prefixes in auxiliary responses are
excluded, while all tokens of the initial
responses are included.

For clarity, we suppress the worker and
microstep indices in the per-prompt objectives
below. Using the clipped surrogate $\ell$
defined in the main text, the group objective is
\begin{equation}
\begin{aligned}
    \mathcal L_{\mathrm{group}}
    = -\frac{1}{N_{\mathrm{tok}}}
      \sum_{j=0}^{K}\sum_{m=1}^{M}
      \sum_{i=a_j}^{L-1}
      \ell\!\left(
        \rho_i^{(j,m)},
        A_{j,m}^{\mathrm{group}}
      \right).
\end{aligned}
\end{equation}
The segment objective is normalized by the
reference length:
\begin{equation}
    \mathcal L_{\mathrm{seg}}
    = -\frac{1}{L}
      \sum_{j=0}^{K}\sum_{i\in\mathcal I_j}
      \ell\!\left(
        \rho_i^{\mathrm{ref}},
        A_j^{\mathrm{seg}}
      \right).
\end{equation}
The reference response contributes to the group
objective through the initial response group
and receives additional segment-level credit
through $\mathcal L_{\mathrm{seg}}$.
The combined per-prompt loss is
\begin{equation}
    \mathcal L_{\mathrm{ForkGRPO}}
    = \mathcal L_{\mathrm{group}}
      + \lambda\mathcal L_{\mathrm{seg}}.
\end{equation}

Gradients are averaged across the $C$ workers
and summed over $H$ accumulation microsteps,
corresponding to
\begin{equation}
    \mathcal L_{\mathrm{step}}
    = \sum_{h=1}^{H}
      \frac{1}{C}\sum_{c=1}^{C}
      \mathcal L_{\mathrm{ForkGRPO}}^{(h,c)}.
    \label{eq:accumulated_loss}
\end{equation}
No additional factor of $1/H$ is applied.
Token normalization is performed separately
for each prompt before worker averaging and
gradient accumulation.

\section{Experimental Details}
\label{app:experimental_details}

\subsection{Benchmarks}
\label{app:benchmarks}

\paragraph{GSM8K.}
We use GSM8K for grade-school mathematical reasoning and follow the standard
train/test split. During RL training, the final answer is extracted from each
generated response and compared against the ground-truth answer. A response
receives terminal reward $+1$ if the extracted answer is correct and $-1$
otherwise. Evaluation reports exact-match accuracy.

\paragraph{MATH-500.}
We train on the MATH training split, comprising 7,500 problems, and evaluate on the 500-problem
MATH-500 test set. Mathematical responses use the same binary terminal reward as GSM8K.
We report exact-match accuracy after answer normalization.

\paragraph{HumanEval and MBPP.}
Following JustGRPO~\citep{ni2026justgrpo}, we use \textsc{AceCode-Hard} for
RL post-training and report pass@1 on HumanEval and MBPP using their standard
unit tests. We adopt the JustGRPO code reward,
\begin{equation}
    R_{\mathrm{code}}
    =
    R_{\mathrm{fmt}}
    +
    \mathbbm{1}[R_{\mathrm{fmt}}=1]R_{\mathrm{exec}},
\end{equation}
where $R_{\mathrm{fmt}}\in\{0,0.5,1\}$ rewards valid code-block formatting
and Python syntax, and $R_{\mathrm{exec}}\in[0,1]$ is the fraction of unit
tests passed. Only syntactically valid programs are executed.

\subsection{Model and Decoding}
\label{app:model_decoding}

The main experiments initialize \textsc{ForkGRPO} and \textsc{ForkGRPO-fast} directly from LLaDA-8B-Instruct~\citep{nie2025llada}. The ablations in Section~\ref{sec:ablation} instead
initialize from a shared public JustGRPO checkpoint, as detailed in Appendix~\ref{app:ablation_protocol}.
Following JustGRPO~\citep{ni2026justgrpo}, RL trajectories reveal response
tokens sequentially from left to right. The underlying masked dLLM retains
bidirectional self-attention over the partially masked sequence.

We use a fixed response length of $L=256$ for both training and evaluation.
Training rollouts are sampled from the fixed rollout policy with temperature
$1$. At evaluation time, we use the standard LLaDA decoding procedure with
block size $32$, low-confidence remasking, $256$ denoising steps, and
temperature $0$ (greedy decoding). The same evaluation configuration is used
for all models in our experiments.

\paragraph{EOS and padding.}
Generation does not stop at EOS, and no EOS-based
loss mask is applied during training.
Prompt tokens are excluded from the loss, and no
response padding is added.

\subsection{ForkGRPO Configuration}
\label{app:fork_config}

\paragraph{ForkGRPO.}
The default configuration uses $K=3$ intermediate fork states and
$M=8$ continuations per group. Together with the root group, each prompt
contains four continuation groups and 32 terminally scored completions.

For $L=256$, the three fork windows are
\[
    [32,64),\qquad [64,96),\qquad [96,128).
\]
Within each window, we first identify candidate positions immediately
following a newline or sentence-ending punctuation mark. If no such position
exists, all positions in the window are used as candidates. We then select
the candidate with the highest predictive entropy,
\begin{equation}
    a_j
    =
    \arg\max_{i\in\mathcal P_j}
    H_i^{\mathrm{ref}},
\end{equation}
where $\mathcal P_j$ denotes the candidate set in window $j$.
The selected state is immediately before position $a_j$, so that the selected
token itself is resampled by all continuations from that fork. Forked
continuations reuse the committed reference prefix, sample only the remaining
suffix, and do not recursively fork. We set the segment-loss coefficient to
$\lambda=0.25$ throughout the main experiments.

\paragraph{Boundary delimiters.}
We identify textual boundaries using newlines
(\texttt{\textbackslash n}), the punctuation marks
\texttt{.}, \texttt{!}, \texttt{?}, and \texttt{;},
the ideographic full stop (U+3002), and the
fullwidth exclamation mark (U+FF01), question mark
(U+FF1F), and semicolon (U+FF1B).
Periods between two digits are excluded.
After a delimiter, we skip consecutive delimiters
and intervening whitespace to locate the next
token containing non-whitespace text.
Only cuts aligned with token boundaries are eligible:
delimiters remain in the fixed prefix, and each
continuation resamples the next content token.
Random Boundary and Entropy Boundary use the same
delimiter set and eligibility rules
(Section~\ref{sec:ablation_anchor}).

\paragraph{ForkGRPO-Fast.}
The fast variant retains the full group size $M=8$ but first samples
$M_{\mathrm{pilot}}=4$ pilot continuations at each root or fork state.
If every pilot group is homogeneous and all groups share the same observed
reward, rollout collection for that prompt terminates after the pilot stage.
Otherwise, only groups exhibiting reward variation are expanded to $M=8$,
while homogeneous groups remain at $M_{\mathrm{pilot}}=4$.
All value estimates and advantages are computed from the samples actually
collected at each state.

For binary-reward mathematical tasks, we further use \emph{1+1} sparse branch
replay, retaining one representative from each observed reward class with
multiplicity correction. Homogeneous groups require no group-relative replay.
For non-binary code rewards, all collected branches are retained. For both
mathematical reasoning and code generation, we apply random state subsampling
with retention ratio $\rho=0.25$ and inverse-retention weighting.

Adaptive rollout reduces continuation sampling, while sparse branch replay and
state subsampling reduce gradient replay. Their estimator properties are
analyzed in Appendix~\ref{app:fast_analysis}.

\subsection{Optimization}
\label{app:optimization}
Our implementation builds on the official JustGRPO codebase~\citep{ni2026justgrpo}, which is distributed
under the MIT License. Our code release will retain its copyright notice and license text.

All training runs use 8 NVIDIA H200 GPUs, with one prompt per GPU per microbatch and gradient
accumulation over 8 microbatches. Each training step consists of one optimizer update with a global batch of 64 prompts. Under the default \textsc{ForkGRPO} configuration, each prompt yields 32 scored completions:
8 initial responses and 8 continuations from each of the 3 fork states. The main experiments use 100 optimizer updates with a learning rate of $5 \times 10^{-5}$. The ablations in Section~\ref{sec:ablation} use
10 additional optimizer updates from the JustGRPO checkpoint.

We set the policy-ratio clipping threshold to
$\epsilon=0.2$, the group-advantage numerical
stabilizer to $\epsilon_A=10^{-4}$, and the
shrinkage numerical stabilizer to
$\epsilon_s=10^{-12}$.

We use AdamW~\citep{loshchilov2017decoupled} with
$\beta_1=0.9$, $\beta_2=0.999$,
$\epsilon_{\mathrm{Adam}}=10^{-8}$,
no weight decay, and gradient clipping at $1.0$.
The model is adapted using LoRA~\citep{hu2021lora}
with rank $128$, scaling parameter $\alpha=64$,
and dropout $0.05$.

For each optimizer update, the rollout policy $\pi_{\mathrm{old}}$ is held
fixed while the root and forked continuation groups are sampled and terminal
rewards are computed. Monte Carlo state values, segment-level credit,
and within-state group-relative advantages are then constructed from these rewards. The token
log-probabilities required by the training objective are recomputed under the
trainable policy $\pi_\theta$, followed by backpropagation and a single
optimizer update.

Gradients are accumulated across 8 prompt microbatches. Rollout generation is
performed under \texttt{torch.no\_grad()} and does not retain the sampling
computation graph. Intermediate rollout data store only token IDs, fork
locations, terminal rewards, and the statistics required to construct the
training objective.

\subsection{Controlled Continuation Protocol for Ablations}
\label{app:ablation_protocol}

\paragraph{Initialization.}
We initialize the ablations in
Section~\ref{sec:ablation} from the JustGRPO
checkpoint for GSM8K publicly released by its
authors on Hugging Face.\footnote{
\url{https://huggingface.co/nzl-thu}}
All variants and training seeds start from the same
checkpoint, which is reloaded independently before
each continuation run.
\paragraph{Training and evaluation.}
Each variant is trained for 10 additional optimizer
updates with each of 5 distinct random seeds.
We evaluate the checkpoint after the tenth update
and report the mean accuracy and sample standard deviation across the 5 runs.
The training data, reward definition, and evaluation
procedure follow Appendices~\ref{app:benchmarks}
and~\ref{app:model_decoding}.
Unless explicitly varied by the ablation, optimization
hyperparameters follow Appendix~\ref{app:optimization}.

\paragraph{Scope.}
This protocol compares the incremental effects of
individual components from a common trained policy
under a fixed update budget.
The main results in Table~\ref{tab:main_results} are
obtained separately through training initialized
directly from LLaDA-8B-Instruct.
\section{Analysis of Segment-Level Credit Assignment}
\label{app:credit_analysis}

\begin{figure}[t]
    \centering
    \includegraphics[width=\linewidth]{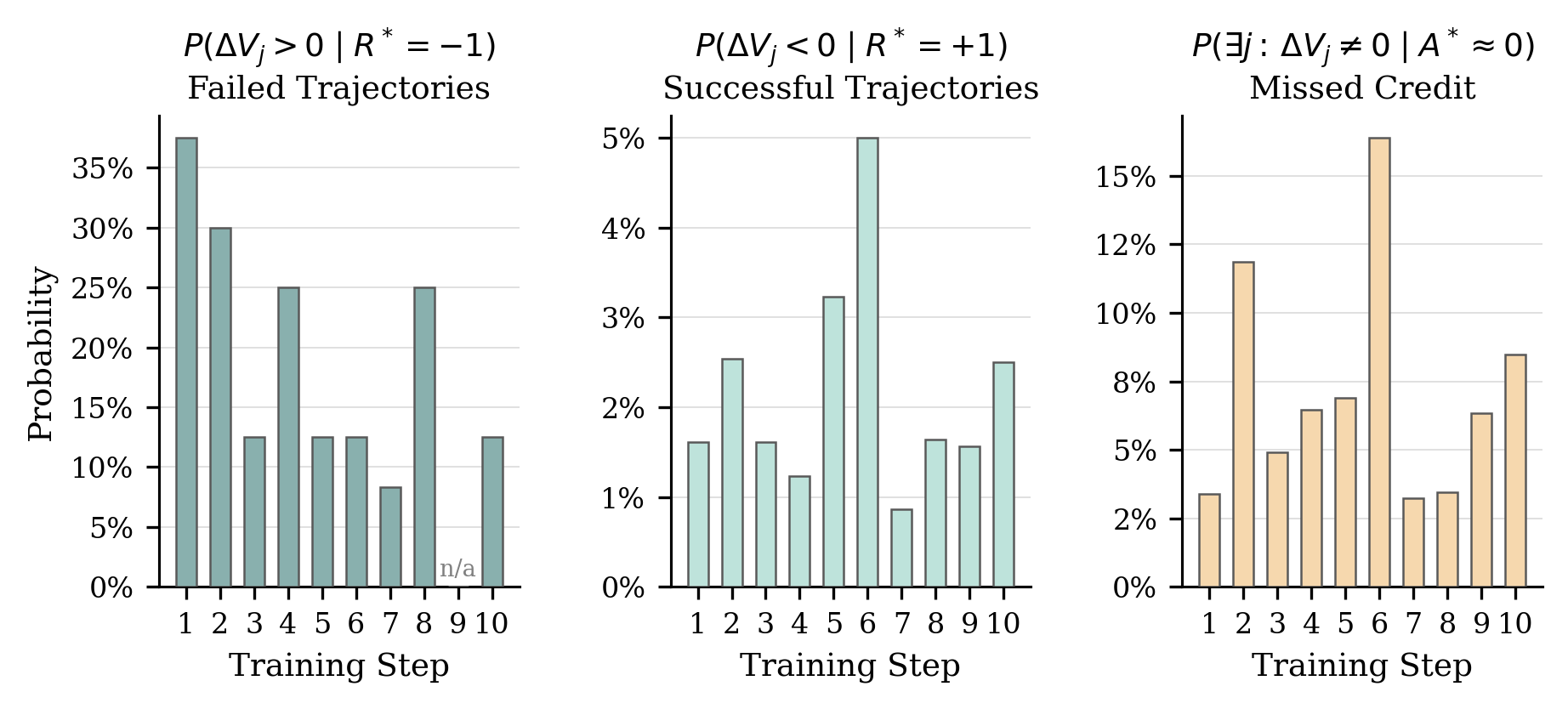}
    \caption{
    \textbf{Diagnosis of segment-level credit over the first 10 training
    steps on GSM8K.}
    \textbf{Left:}
    $P(\Delta V_j>0 \mid R(q,y^{\mathrm{ref}})=-1)$, measuring segments that
    improve continuation value despite an unsuccessful final response.
    \textbf{Middle:}
    $P(\Delta V_j<0 \mid R(q,y^{\mathrm{ref}})=+1)$, measuring segments that
    reduce continuation value despite a successful final response.
    \textbf{Right:}
    $P(\exists j:\Delta V_j\neq0 \mid
    A_{\mathrm{ref}}^{\mathrm{traj}}\approx0)$, measuring segment-level
    signals missed by trajectory-level credit.
    }
    \label{fig:credit_diagnosis}
\end{figure}

We examine whether a single trajectory-level advantage faithfully represents
the contributions of different parts of a response. Using the conditional
continuations already collected by \textsc{ForkGRPO}, we estimate the value of each
root or fork state as
\begin{equation}
    \widehat V_j
    =
    \frac{1}{M}
    \sum_{m=1}^{M} R_{j,m},
    \qquad j=0,\ldots,K,
    \label{eq:app_mc_value}
\end{equation}
and define
\begin{equation}
    \widehat V_{K+1}=R(q,y^{\mathrm{ref}}).
\end{equation}
The value change assigned to segment $j$ is therefore
\begin{equation}
    \Delta V_j
    =
    \widehat V_{j+1}-\widehat V_j,
    \qquad j=0,\ldots,K.
    \label{eq:app_delta_v}
\end{equation}

\paragraph{Misassigned credit.}
A trajectory-level advantage assigns the same outcome-dependent credit to all
tokens in a response, although individual segments can have different effects
on the expected terminal reward. We diagnose two forms of sign mismatch:
\begin{equation}
    P\!\left(
        \Delta V_j>0
        \mid
        R(q,y^{\mathrm{ref}})=-1
    \right)
\end{equation}
and
\begin{equation}
    P\!\left(
        \Delta V_j<0
        \mid
        R(q,y^{\mathrm{ref}})=+1
    \right).
\end{equation}

As shown in Fig.~\ref{fig:credit_diagnosis}, approximately $8\%$--$38\%$ of
segments within failed trajectories nevertheless have positive value change.
Thus, a uniformly negative trajectory-level signal can suppress segments that
improve the prospect of eventual success. The reverse case is less frequent
but remains present: approximately $1\%$--$5\%$ of segments within successful
trajectories have negative value change and would otherwise receive positive
trajectory-level credit.

\paragraph{Missed credit.}
Trajectory-level group-relative credit can also vanish even when meaningful
within-trajectory changes remain. We measure
\begin{equation}
    P\!\left(
        \exists j:\Delta V_j\neq0
        \mid
        A_{\mathrm{ref}}^{\mathrm{traj}}\approx0
    \right),
    \label{eq:app_missed_credit}
\end{equation}
where $A_{\mathrm{ref}}^{\mathrm{traj}}$ denotes the response-level advantage
assigned to the reference trajectory. Figure~\ref{fig:credit_diagnosis} shows
that approximately $3\%$--$17\%$ of such trajectories still contain at least
one segment with nonzero value change. These cases receive little or no
trajectory-level update despite containing identifiable intermediate credit.

\paragraph{Implication.}
The diagnostic reveals two distinct limitations of trajectory-level credit:
\begin{equation}
\begin{aligned}
\text{misassigned credit:}\quad&
\operatorname{sign}(\Delta V_j)
\neq
\operatorname{sign}(A_{\mathrm{ref}}^{\mathrm{traj}}),\\
\text{missed credit:}\quad&
A_{\mathrm{ref}}^{\mathrm{traj}}\approx0,
\quad
\exists j:\Delta V_j\neq0.
\end{aligned}
\end{equation}
\textsc{ForkGRPO} instead uses changes in continuation value to assign credit at the
segment level, recovering learning signals that can be incorrectly signed or
absent under a single response-level advantage.

\subsection{Reliability of Monte Carlo State Values}
\label{sec:value_samples}

Segment credit depends on differences between Monte Carlo estimates of
successive state values. We therefore examine how many continuations are
needed for these estimates to become stable on GSM8K. For a held-out set of
intermediate states, we sample $M_{\max}=32$ continuations and use the
resulting estimates as a high-sample reference. For each
$M\in\{2,4,8,16\}$, we repeat the subsampling procedure 20 times and report
the average value-estimation error and sign agreement of adjacent value
differences.
Specifically, we measure
\begin{equation}
    \operatorname{MAE}
    \left(
        \widehat V^{(M)},
        \widehat V^{(M_{\max})}
    \right)
 \end{equation}   
and
\begin{equation}
    \Pr\!\left[
    \operatorname{sign}
    \left(
        \Delta V_j^{(M)}
    \right)
    =
    \operatorname{sign}
    \left(
        \Delta V_j^{(M_{\max})} 
    \right)
    \right].
    \label{eq:value_sign_agreement}
\end{equation}
Table~\ref{tab:value_samples} shows that Monte Carlo state-value estimates
stabilize rapidly as $M$ increases. From $M=2$ to $M=8$, value MAE decreases
from $0.29$ to $0.07$, while $\Delta V$ sign agreement rises from $74.4\%$
to $92.3\%$. Increasing to $M=16$ provides only modest additional gains,
supporting $M=8$ as a favorable trade-off between reliable segment credit and
branching coverage. Sign agreement is particularly important because it
determines the direction of the segment-level update.

\begin{table}[t]
\centering
\fontsize{8pt}{7pt}\selectfont
\setlength{\tabcolsep}{15pt}
\renewcommand{\arraystretch}{1.10}

\captionof{table}{
\textbf{Monte Carlo value-estimation reliability.}
}
\vspace{-2mm}
\label{tab:value_samples}

\begin{tabular}{ccc}
\toprule
\textbf{$M$}
& \textbf{Value MAE $\downarrow$}
& \textbf{$\Delta V$ Agree. $\uparrow$} \\
\midrule
2  & 0.29 & 74.4 \\
4  & 0.22 & 84.1 \\
8  & 0.07 & 92.3 \\
16 & 0.06 & 94.8 \\
\bottomrule
\end{tabular}
\end{table}

\section{Analysis of ForkGRPO-Fast}
\label{app:fast_analysis}

We analyze the three efficiency mechanisms in
\textsc{ForkGRPO-Fast}. Random branch and state replay sparsify gradient
computation after rollouts have been collected, whereas adaptive rollout
reduces the number of continuations collected in the first place.

\subsection{Sparse Gradient Replay}
\label{app:fast_unbiased}

We first condition on a fixed set of collected rollouts. Under this
conditioning, all rewards, group-relative advantages, and normalization terms
are fixed.

\paragraph{Branch replay.}
For binary-reward tasks, responses with the same reward within a group share
the same standardized group-relative advantage. Let
$\mathcal G_j^c$ denote the branches in reward class
$c\in\{+,-\}$, with $k_j^c=|\mathcal G_j^c|$, and let
$g_{j,m}$ denote the complete gradient contribution of branch $m$ to the
group objective.

Suppose we uniformly sample
$r_j^c\leq k_j^c$ branches without replacement, yielding
$\widetilde{\mathcal G}_j^c$, and apply multiplicity correction:
\begin{equation}
    \widehat G_j^c
    =
    \frac{k_j^c}{r_j^c}
    \sum_{m\in\widetilde{\mathcal G}_j^c}
    g_{j,m}.
    \label{eq:branch_replay_estimator}
\end{equation}

\begin{proposition}[Unbiased branch replay]
\label{prop:branch_replay}
Conditioned on the collected rollout set,
\begin{equation}
    \mathbb E[\widehat G_j^c]
    =
    \sum_{m\in\mathcal G_j^c}g_{j,m}.
\end{equation}
\end{proposition}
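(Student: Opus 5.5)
The plan is to write the estimator as a sum over the whole class with random inclusion indicators and then use linearity of expectation. Throughout, condition on the collected rollout set. This fixes the class $\mathcal G_j^c$, its size $k_j^c$, the replay count $r_j^c$ (assumed to satisfy $1\le r_j^c\le k_j^c$), and every per-branch gradient $g_{j,m}$. The last point holds because rewards, standardized group-relative advantages, the normalizer $N_{\mathrm{tok}}$, and the policy ratios are all deterministic functions of the collected rollouts and current parameters. The only remaining randomness is the uniform draw of $\widetilde{\mathcal G}_j^c$ without replacement.

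First, introduce indicators $\xi_m=\mathbbm{1}[m\in\widetilde{\mathcal G}_j^c]$ for $m\in\mathcal G_j^c$. With these, Eq.~\eqref{eq:branch_replay_estimator} becomes
$\widehat G_j^c=\frac{k_j^c}{r_j^c}\sum_{m\in\mathcal G_j^c}\xi_m g_{j,m}$.
Second, compute the marginal inclusion probability. Under uniform sampling of an $r_j^c$-subset from $k_j^c$ elements, the number of subsets containing a given $m$ is $\binom{k_j^c-1}{r_j^c-1}$ out of $\binom{k_j^c}{r_j^c}$ total. Hence $\Pr(\xi_m=1)=r_j^c/k_j^c$, which equals the same value for every $m$ by exchangeability. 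Third, take the conditional expectation. Since the $g_{j,m}$ are fixed, linearity gives
$\mathbb E[\widehat G_j^c]=\frac{k_j^c}{r_j^c}\sum_m \frac{r_j^c}{k_j^c}g_{j,m}=\sum_{m\in\mathcal G_j^c}g_{j,m}$,
which is the claim. Dependence among the $\xi_m$ is irrelevant because only first moments enter.

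The step needing the most care is justifying that each $g_{j,m}$ is genuinely nonrandom given the rollouts. In particular, the shared advantage within a reward class, and therefore $g_{j,m}$ itself, must not depend on which representatives are selected. This is where the binary-reward assumption is used: every member of $\mathcal G_j^c$ has the same $R_{j,m}$, so they share the same $A^{\mathrm{group}}_{j,m}$. The group statistics $\bar R_j$ and $\widehat\sigma_j$ are computed from all collected samples, not from the replayed subset. I would state this explicitly.

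I would also remark on two edge cases. If the class is empty, both sides are zero. The result extends by summing over $c\in\{+,-\}$ and over $j$, so the full group gradient is also recovered in expectation.
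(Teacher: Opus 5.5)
Your proof is correct and follows the paper's argument: each branch in $\mathcal G_j^c$ is included with probability $r_j^c/k_j^c$, and linearity of expectation cancels the multiplicity factor $k_j^c/r_j^c$. Your binomial-count derivation $\binom{k_j^c-1}{r_j^c-1}/\binom{k_j^c}{r_j^c}=r_j^c/k_j^c$ and your remark that the $g_{j,m}$ are fixed given the rollouts add useful detail. One correction: unbiasedness only requires that the $g_{j,m}$ not depend on which representatives are selected, which holds because advantages and normalizers are computed from all collected samples; the binary-reward assumption is not used here.
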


\begin{proof}
Each branch in $\mathcal G_j^c$ is included with probability
$r_j^c/k_j^c$. Therefore,
\begin{equation}
\begin{aligned}
    \mathbb E[\widehat G_j^c]
    &=
    \frac{k_j^c}{r_j^c}
    \sum_{m\in\mathcal G_j^c}
    \frac{r_j^c}{k_j^c}g_{j,m} \\
    &=
    \sum_{m\in\mathcal G_j^c}g_{j,m}.
\end{aligned}
\end{equation}
\end{proof}

Summing over the two reward classes therefore preserves the complete group
gradient in expectation. A homogeneous binary-reward group has zero
group-relative advantage and requires no group replay. This optimization is
used only for binary-reward tasks.

\paragraph{State subsampling.}
Consider a retained branch with $n$ replay states and let $g_i$ denote the
gradient contribution of state $i$. The full contribution is
\begin{equation}
    G=\sum_{i=1}^{n}g_i.
\end{equation}
If $r$ states are sampled uniformly without replacement, we use
\begin{equation}
    \widehat G
    =
    \frac{n}{r}
    \sum_{i\in\widetilde{\mathcal S}}g_i.
    \label{eq:state_replay_estimator}
\end{equation}

\begin{proposition}[Unbiased random state replay]
\label{prop:state_replay}
Uniform state subsampling with inverse-retention correction preserves the
full state-level gradient in expectation:
\begin{equation}
    \mathbb E[\widehat G]=G.
\end{equation}
\end{proposition}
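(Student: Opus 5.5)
The plan is to mirror the argument for Proposition~\ref{prop:branch_replay}, now at the level of states within a single retained branch. We condition throughout on the collected rollout set and on the retained branch. Under this conditioning the replay states $1,\dots,n$, their gradient contributions $g_1,\dots,g_n$, the advantages and the normalization constants are all fixed. The only remaining randomness is the uniformly random subset $\widetilde{\mathcal S}$ of size $r$ drawn without replacement from $\{1,\dots,n\}$.

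First, rewrite the estimator in Eq.~\eqref{eq:state_replay_estimator} with indicator variables:
\[
\widehat G=\frac{n}{r}\sum_{i=1}^{n}\mathbbm{1}[i\in\widetilde{\mathcal S}]\,g_i .
\]
This turns a random-index sum into a fixed-index sum with random weights, so linearity of expectation applies directly.

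Second, compute the inclusion probability. Under uniform sampling of $r$ out of $n$ without replacement, each state is included with probability
\[
\Pr[i\in\widetilde{\mathcal S}]=\binom{n-1}{r-1}\Big/\binom{n}{r}=\frac{r}{n},
\]
which also follows from exchangeability of the $n$ states.

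Third, take expectations:
\[
\mathbb E[\widehat G]=\frac{n}{r}\sum_{i=1}^{n}\frac{r}{n}\,g_i=\sum_{i=1}^{n}g_i=G .
\]
Nothing needs to be bounded, and no independence between the inclusion indicators is required.

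The step needing care is not the computation but matching the estimator to the stratified scheme described in Section~\ref{sec:forkgrpo_fast}. If states are subsampled within strata (for example, position blocks after the fork), I would apply the argument above within each stratum $s$, with its own $n_s$, $r_s$ and weight $n_s/r_s$, and then sum the unbiased stratum estimates by linearity. If $\rho n$ is not an integer and $r$ is itself random, I would condition on $r$ first and then use the tower property. Finally, composing with branch replay gives unbiasedness of the combined \emph{1+1} plus state-subsampled estimator: condition on the retained branches, apply this result, and then invoke Proposition~\ref{prop:branch_replay}. This holds provided the state subsampling is independent of the branch selection given the rollouts.
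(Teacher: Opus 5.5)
Your proof is correct and matches the paper's argument: each state is included with probability $r/n$, so by linearity of expectation the $n/r$ weight recovers $G$, and the stratified case follows by applying the same argument within each stratum. Your indicator-variable rewriting, the conditioning on a random $r$, and the composition with branch replay spell out details the paper states more briefly (the last is its Corollary~\ref{cor:sparse_replay}).
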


\begin{proof}
Each state is included with probability $r/n$, giving
\begin{equation}
    \mathbb E[\widehat G]
    =
    \frac{n}{r}
    \sum_{i=1}^{n}
    \frac{r}{n}g_i
    =
    \sum_{i=1}^{n}g_i.
\end{equation}

The same argument applies independently within each stratum under stratified
uniform sampling.
\end{proof}
\begin{corollary}[Unbiased sparse replay]
\label{cor:sparse_replay}
Conditioned on the collected rollout set, multiplicity-corrected random branch
replay followed by inverse-retention random state subsampling is an unbiased
estimator of the full \textsc{ForkGRPO} group gradient.
\end{corollary}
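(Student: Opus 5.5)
The plan is to compose Proposition~\ref{prop:branch_replay} and Proposition~\ref{prop:state_replay} through the tower property of conditional expectation. Throughout, condition on the collected rollout set, so that all rewards, group-relative advantages, and the normalizer $N_{\mathrm{tok}}$ are fixed. The full group gradient then decomposes linearly as
\begin{equation}
    \nabla\mathcal L_{\mathrm{group}}=\sum_{j=0}^{K}\sum_{c\in\{+,-\}}\sum_{m\in\mathcal G_j^c}g_{j,m},
    \qquad
    g_{j,m}=\sum_{i=a_j}^{L-1}g_{j,m,i},
\end{equation}
where $g_{j,m,i}$ is the per-state gradient contribution, including the factor $-1/N_{\mathrm{tok}}$ and the fixed advantage $A_{j,m}^{\mathrm{group}}$. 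Homogeneous groups contribute zero and can be dropped, since their advantages vanish. This accounts for replaying nothing in those groups.

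Next, define the two-stage estimator. First draw the branch subsets $\widetilde{\mathcal G}_j^c$. Then, independently for each retained branch $m$, draw a state subset $\widetilde{\mathcal S}_{j,m}$ of size $r_{j,m}$ out of $n_{j,m}=L-a_j$ states, uniformly or stratified-uniformly. The estimator is
\begin{equation}
    \widehat G=\sum_{j,c}\frac{k_j^c}{r_j^c}\sum_{m\in\widetilde{\mathcal G}_j^c}\widehat g_{j,m},
    \qquad
    \widehat g_{j,m}=\frac{n_{j,m}}{r_{j,m}}\sum_{i\in\widetilde{\mathcal S}_{j,m}}g_{j,m,i}.
\end{equation}

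The proof then takes the inner expectation over state sampling, conditional on the branch selection. The key requirement is that the state-sampling law for branch $m$ does not depend on which other branches were selected. Under that condition, Proposition~\ref{prop:state_replay}, applied to each retained branch and to each stratum, gives
\begin{equation}
    \mathbb E\bigl[\widehat g_{j,m}\mid \widetilde{\mathcal G}\bigr]=g_{j,m}.
\end{equation}
By linearity, $\mathbb E\bigl[\widehat G\mid\widetilde{\mathcal G}\bigr]$ equals the branch-replay estimator of Eq.~\eqref{eq:branch_replay_estimator}. Taking the outer expectation and applying Proposition~\ref{prop:branch_replay} for each pair $(j,c)$ recovers $\sum_{m\in\mathcal G_j^c}g_{j,m}$. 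Summing over $c$ and $j$ yields the full group gradient.

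The main obstacle is one subtle point rather than any computation. The multiplicity weights $k_j^c/r_j^c$ and the inverse-retention weights $n/r$ must be deterministic given the rollout set. Equivalently, $r_j^c$ and $r_{j,m}$ must not depend on the random draws themselves. With \emph{1+1} replay, $r_j^c=\min(1,k_j^c)$ is a function of the rewards only, so this holds. I would state this explicitly and note that the nonlinearity of the clipped surrogate causes no difficulty: the estimator is linear in the per-state gradients, all of which are evaluated at the same $\theta$.
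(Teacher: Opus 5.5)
Your proposal is correct and follows the paper's own argument: the paper also proves the corollary by taking the conditional expectation over state subsampling first (Proposition~\ref{prop:state_replay}) and then over branch selection (Proposition~\ref{prop:branch_replay}). The conditions you state explicitly are the assumptions left implicit in the paper's one-line proof: subsample sizes fixed given the rollout set, and state sampling uniform conditional on the branch selection.
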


This follows by applying conditional expectation first over state sampling
and then over branch sampling. Thus, \emph{1+1} or \emph{2+2} branch replay
can be combined with random state subsampling without changing the expected
gradient update.


\subsection{Adaptive Rollout}
\label{app:fast_adaptive}

Adaptive rollout changes the sampling stage rather than the gradient replay
stage. For each root or fork state, \textsc{ForkGRPO-fast} first collects
$M_{\mathrm{pilot}}$ continuations. If every pilot group is homogeneous and
all groups share the same observed reward $c$, then under the pilot samples
\begin{equation}
    A_{j,m}^{\mathrm{group}}=0,
    \qquad
    \widehat V_j=c,
    \qquad
    \Delta V_j=0
\end{equation}
for all groups and segments. The pilot samples therefore contain neither
within-state nor segment-level credit, and rollout collection for that prompt
is terminated.

Otherwise, only groups exhibiting reward variation are expanded to the full
budget $M$; homogeneous groups remain at $M_{\mathrm{pilot}}$. All subsequent
statistics are computed using the number of continuations actually collected
at each state.

Unlike sparse replay, this procedure is not generally unbiased with respect
to fixed-$M$ \textsc{ForkGRPO} because the number of collected samples depends on the
observed pilot rewards. We therefore treat adaptive rollout as an efficiency
approximation. In contrast, random branch and state replay remain unbiased
conditioned on whatever rollout set adaptive sampling produces.

\section{Ablation of ForkGRPO-Fast}
\label{sec:fast_ablation}

We study the efficiency--performance trade-off of
\textsc{ForkGRPO-Fast} on GSM8K. All variants use the same
LLaDA-8B-Instruct initialization, training data, LoRA configuration,
optimizer, prompt batch size, response length, training schedule, hardware,
and evaluation protocol. Each ablation enables only the efficiency mechanism
indicated in Table~\ref{tab:fast_ablation}, while keeping the remaining
\textsc{ForkGRPO} configuration unchanged.

We measure wall-clock time at the granularity of one complete training step
and report the average over five measured steps for each configuration.
For each step, we separately record rollout generation time, gradient-replay
time, and total step time. Because rollout collection is stochastic, the
relative time spent in rollout and replay can vary across configurations and
training steps. We therefore measure the overall speedup directly from total
step time rather than deriving it from a fixed stage-wise time ratio.
For reference, full \textsc{ForkGRPO} spends on average $59\%$ of its measured
wall-clock time on rollout generation and $41\%$ on gradient replay.

\begin{table}[t]
\centering
\small
\fontsize{8pt}{9pt}\selectfont
\setlength{\tabcolsep}{3pt}
\caption{
\textbf{Ablation of \textsc{ForkGRPO-Fast} on GSM8K.}
Wall-clock measurements are averaged over five training steps.
Rollout, replay, and overall speedups are measured directly relative to
full \textsc{ForkGRPO}; overall speedup is computed from total step time.
}
\label{tab:fast_ablation}
\begin{tabular}{lcccc}
\toprule
\textbf{Variant}
& \textbf{Acc.}
& \textbf{Rollout}
& \textbf{Replay}
& \textbf{Overall} \\
\midrule
\textsc{ForkGRPO} 
& \textbf{90.4}
& $1.00\times$
& $1.00\times$
& $1.00\times$ \\

+ Adaptive rollout
& 90.3
& $1.76\times$
& $1.00\times$
& $1.37\times$ \\

+ Branch replay (1+1)
& 90.3
& $1.00\times$
& $2.26\times$
& $1.24\times$ \\

+ State subsampling ($\rho{=}0.25$)
& 90.2
& $1.00\times$
& $2.43\times$
& $1.34\times$ \\

\textsc{ForkGRPO-Fast}
& 90.3
& $\mathbf{1.76}\times$
& $\mathbf{5.32}\times$
& $\mathbf{2.49}\times$ \\
\bottomrule
\end{tabular}
\end{table}

\paragraph{Adaptive rollout.}
We first enable adaptive rollout alone while leaving gradient replay
unchanged. Each continuation group starts with
$M_{\mathrm{pilot}}=4$ samples and expands to the full $M=8$ only when the
pilot rewards exhibit variation. This reduces rollout time by
$1.76\times$ and yields a directly measured $1.37\times$ overall speedup,
while accuracy decreases by only $0.1$ points
($90.4\%\rightarrow90.3\%$). The result indicates that many continuation
groups do not require the full rollout budget to provide useful credit
signals.

\paragraph{Sparse gradient replay.}
We next keep the rollout procedure unchanged and sparsify only gradient
replay. On GSM8K, \emph{1+1} branch replay retains one representative from
each observed binary reward class within a mixed group. It accelerates replay
by $2.26\times$ and the complete training step by $1.24\times$, while
retaining $90.3\%$ accuracy. Random state subsampling with
$\rho=0.25$ provides a $2.43\times$ replay speedup and a $1.34\times$
overall speedup, with $90.2\%$ accuracy. These results show that dense
gradient replay over every sampled branch and generation state is not
necessary for effective optimization.

\paragraph{Combined acceleration.}
Finally, \textsc{ForkGRPO-Fast} combines adaptive rollout,
\emph{1+1} branch replay, and random state subsampling with
$\rho=0.25$. The combined configuration achieves a $1.76\times$ rollout
speedup and a $5.32\times$ replay speedup, resulting in a directly measured
$2.49\times$ overall training speedup. Despite this substantial reduction in
wall-clock cost, accuracy remains $90.3\%$, only $0.1$ points below full
\textsc{ForkGRPO}. This result supports the central design of \textsc{ForkGRPO-Fast}: the
sampling needed to identify informative credit can be decoupled from the
amount of gradient computation required to learn from it.

\section{Generation Examples}
\label{sec:visualization}

\begin{figure}[t!]
    \centering
    \includegraphics[
        trim={210 145 208 110},
        clip,
        width=\linewidth
    ]{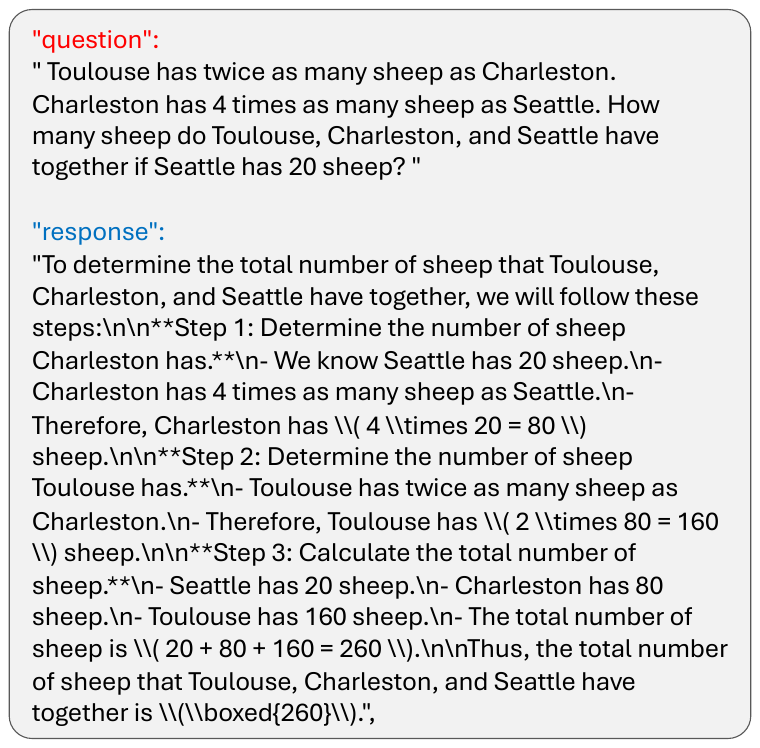}
    \caption{
    \textbf{Representative ForkGRPO generation on GSM8K.}
    The model correctly propagates the multiplicative relations among the
    three quantities, obtaining 80 sheep for Charleston and 160 for Toulouse,
    before producing the correct total of $260$.
    }
    \label{fig:gsm8k_vis1}
\end{figure}

\begin{figure}[t!]
    \centering
    \includegraphics[
        trim={210 254 208 110},
        clip,
        width=\linewidth
    ]{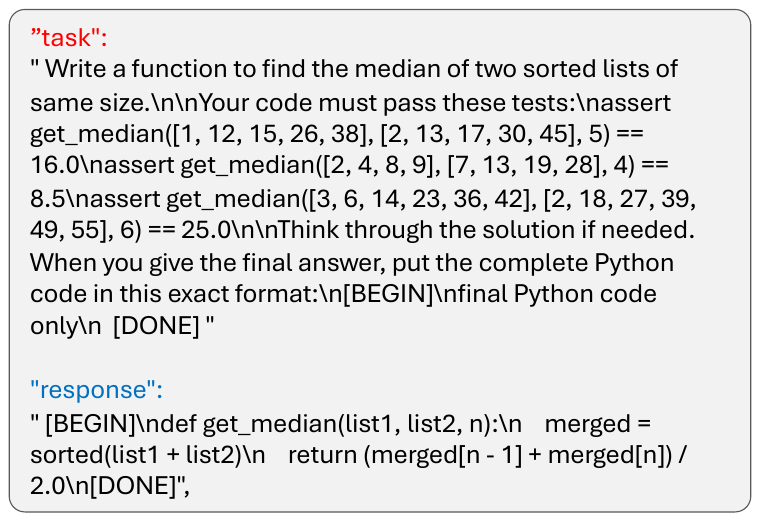}
    \caption{
    \textbf{Representative ForkGRPO generation on MBPP.}
    The model constructs a valid implementation by merging the two input
    lists, sorting the combined sequence, and averaging its two central
    elements. The resulting function satisfies all provided unit tests.
    }
    \label{fig:mbpp_vis2}
\end{figure}

Figures~\ref{fig:gsm8k_vis1} and~\ref{fig:mbpp_vis2} show representative
generations from \textsc{ForkGRPO} on mathematical reasoning and code
generation. In the GSM8K example, the model maintains the required numerical
dependencies across multiple reasoning steps and arrives at the correct final
answer. In the MBPP example, it translates the task specification into a
compact executable program that correctly computes the median of two
equal-sized sorted lists and passes all provided tests. Together, these
examples qualitatively illustrate the coherent reasoning and task completion
behavior learned through RL post-training, complementing the aggregate
benchmark results reported above.

\section{ForkGRPO Pseudocode}
\label{app:pseudocode}

\begin{algorithm}[t]
\caption{\textsc{ForkGRPO} }
\label{alg:forkgrpo}
\small
\raggedright
\begin{algorithmic}[1]
\Require Prompt batch $\mathcal B$,
policies $\pi_{\mathrm{old}},\pi_\theta$,
fork windows $\{\mathcal W_j\}_{j=1}^{K}$,
group size $M$, length $L$, coefficient $\lambda$

\ForAll{$q\in\mathcal B$}
    \State Sample $M$ initial responses;
    use the first as $y^{\mathrm{ref}}$
    \State Record pre-sampling entropies
    $\{H_t^{\mathrm{ref}}\}$
    \State Set $a_0=0$ and $a_{K+1}=L$

    \For{$j=1,\ldots,K$}
        \State Construct boundary-preferred
        candidates $\mathcal P_j$ in $\mathcal W_j$
        \State Select
        $a_j=\arg\max_{t\in\mathcal P_j}
        H_t^{\mathrm{ref}}$
        \State Sample $M$ continuations from
        $s_{a_j}(q,y^{\mathrm{ref}})$
    \EndFor

    \State Evaluate terminal rewards $R_{j,m}$
    \State Estimate $\widehat V_0,\ldots,\widehat V_K$
    from group means
    \State Set
    $\widehat V_{K+1}=R(q,y^{\mathrm{ref}})$
    \State Compute $\Delta V_j$ and shrunk
    advantages $A_j^{\mathrm{seg}}$
    \State Compute within-state advantages
    $A_{j,m}^{\mathrm{group}}$
    \State Recompute token log probabilities
    under $\pi_\theta$
    \State Accumulate gradients of
    $\mathcal L_{\mathrm{group}}
    +\lambda\mathcal L_{\mathrm{seg}}$
\EndFor

\State Update $\theta$ using accumulated gradients
\end{algorithmic}
\end{algorithm}

All responses are sampled under the fixed rollout
policy $\pi_{\mathrm{old}}$ without gradient tracking,
and their token log probabilities are stored.
The reference is selected independently of reward.
Forked continuations resample from position $a_j$,
and only newly generated tokens contribute to
their group objective.
Value estimates and objectives follow
Sections~\ref{sec:method_segment}
and~\ref{sec:method_objective};
gradient accumulation follows
Appendix~\ref{app:loss_normalization}.

\begin{algorithm}[t]
\caption{\textsc{ForkGRPO-Fast}}
\label{alg:forkgrpo_fast}
\small
\raggedright
\begin{algorithmic}[1]
\Require Inputs of Algorithm~\ref{alg:forkgrpo},
pilot size $M_{\mathrm{pilot}}\leq M$,
replay mode $\mathcal R$, retention ratio $\rho$

\ForAll{$q\in\mathcal B$}
    \State Sample $M_{\mathrm{pilot}}$ initial
    responses; use the first as $y^{\mathrm{ref}}$
    \State Select fork states as in
    Algorithm~\ref{alg:forkgrpo}
    \State Sample $M_{\mathrm{pilot}}$ responses
    per fork and score all responses

    \If{all pilot rewards across groups are identical}
        \State \textbf{continue}
    \EndIf

    \State Expand and score mixed-reward groups
    to size $M$
    \State Compute $\widehat V_j$,
    $A_j^{\mathrm{seg}}$, and
    $A_{j,m}^{\mathrm{group}}$
    \State Fix loss normalizers using all samples

    \State Select replay branches using $\mathcal R$
    \State Randomly retain a fraction $\rho$
    of their generated states
    \State Compute weighted
    $\widehat{\mathcal L}_{\mathrm{group}}$
    \State Compute $\mathcal L_{\mathrm{seg}}$
    on $y^{\mathrm{ref}}$ separately
    \State Accumulate gradients of
    $\widehat{\mathcal L}_{\mathrm{group}}
    +\lambda\mathcal L_{\mathrm{seg}}$
\EndFor

\State Update $\theta$ using accumulated gradients
\end{algorithmic}
\end{algorithm}

\textsc{ForkGRPO-Fast} expands only groups with
differing pilot rewards.
All collected responses contribute to value and
advantage estimation using their actual group
sizes $M_j$.
The group-loss denominator is
$N_{\mathrm{tok}}=\sum_{j=0}^{K}M_j(L-a_j)$
and remains fixed during sparse replay.

For binary rewards, $\mathcal R=\textsc{1+1}$
uniformly retains one response per reward class
and weights it by the class size.
$\mathcal R=\textsc{Full}$ retains all responses
and is used for nonbinary rewards.
Random state subsampling uses inverse-inclusion
weights, preserving the group-objective gradient
in expectation conditional on the collected samples.
The reference contributes to the segment objective
regardless of branch selection.

Setting $M_{\mathrm{pilot}}=M$,
$\mathcal R=\textsc{Full}$, and $\rho=1$
recovers standard \textsc{ForkGRPO}.
Prompts skipped because all rewards are identical
have zero group and segment advantages.

\end{document}